\pgfplotsset{compat=1.3}
\definecolor{series_runningtime}{rgb}{0.55,0.05,0.05}   
\definecolor{series_precisioninbits}{rgb}{0.05,0.05,0.55}
\theoremstyle{definition}
\newtheorem{definition}{Definition}
\theoremstyle{lemma}
\newtheorem{lemma}{Lemma}
\newtheorem{theorem}{Theorem}
\renewcommand{\ALG@beginalgorithmic}{\small}
\newcommand\shortenXor[2]{#2}
\newcommand{\gobble}[1]{}
\newcommand{\gobblexor}[2]{#2}  
\newcommand\shortenXor[2]{#1}
\newcommand{\gobble}[1]{#1}
\newcommand{\gobblexor}[2]{#1} 
\renewcommand{\emptyset}{\varnothing}
\DeclareMathOperator{\Walks}{\mathrm{Walks}}
\newcommand{\src}{{\rm src}}
\newcommand{\srcfunc}[1]{\src({#1})}
\newcommand{\tgt}{{\rm tgt}}
\newcommand{\tgtfunc}[1]{\tgt({#1})}
\newcommand{\pow}[1]{\ensuremath{\raisebox{.15\baselineskip}{\Large\ensuremath{\wp}}({#1})}\xspace}
\newcommand\together{\raisebox{1.5pt}{\ensuremath{{}_\boxminus}}}
\newcommand\apart{\raisebox{1.5pt}{\ensuremath{{}_\boxtimes}}}
\newcommand\senseUnion[1]{\ensuremath{\mathbf{Y}({#1})}}
\newcounter{tecounter}
\newcommand*{\probleminternal}[4]{
	\par
	\medskip
	\noindent\fbox{\parbox{0.98\columnwidth}{
			\textbf{#4: #1} \\[0.05in]
			\renewcommand{\tabcolsep}{2pt}
			\begin{tabularx}{\linewidth}{rX}
				\emph{Input:} & #2 \\
				\emph{Output:} & #3
			\end{tabularx}
		}}
		\par
		\medskip
		\par
	}
\newcommand*{\decproblem}[3]{\probleminternal{#1}{#2}{#3}{Decision Problem}}
\newcommand{\slsh}{\slash\hspace{0pt}}
\begin{document}

\title{
\ifdefined\iros
\LARGE
\phantom{Sensor}\\[-4pt]
Sensor~selection~for~fine-grained~behavior~verification~that~respects~privacy
\vspace*{-12pt}
\else
Sensor selection for fine-grained behavior verification that respects privacy (extended version)
\vspace*{-5pt}
\fi
}

\author{Rishi Phatak \and Dylan A. Shell
\thanks{\hspace*{-2.2ex}\scriptsize Both authors are affiliated with Dept. of Computer Science \& Engineering, 
Texas A\&M University, College Station, TX, USA.
        \scalebox{0.8}{\textsf{\{rishi.phatak$\,|\,$dshell\}@tamu.edu}}.
        }
}


\maketitle

\vspace*{-15pt}
\begin{abstract}
A useful capability is that of classifying some agent's behavior using
data from a sequence, or trace, of sensor measurements. The sensor selection
problem involves choosing a subset of available sensors to ensure that,
when generated, observation traces will contain enough information to determine whether the agent's
activities match some pattern. 
In generalizing prior work, this paper studies a formulation in which
multiple behavioral itineraries may be supplied, with sensors selected to
distinguish between behaviors. This allows one to pose fine-grained questions,
e.g., to position the agent's activity on a spectrum.
In addition, with multiple itineraries, one can 
also ask about choices of sensors where some behavior is always 
plausibly concealed by (or mistaken for) another.
%
Using sensor ambiguity to limit the acquisition of knowledge is a strong  privacy guarantee, a form of guarantee which some earlier work examined under formulations
distinct from our inter-itinerary conflation approach.
By concretely
formulating privacy requirements for sensor selection, 
this paper
connects both lines of work in a novel fashion: privacy---where there is a bound from above, and
behavior verification---where sensors choices are bounded from below.  
We examine the
worst-case computational complexity that results from both types of bounds,
proving that upper bounds are 
more challenging 
under standard computational
complexity assumptions.  The problem is intractable in general, but we introduce an
approach to solving this problem that can exploit interrelationships
between constraints, and identify opportunities for optimizations.  Case studies
are presented to demonstrate the usefulness and scalability of our proposed
solution, and to assess the impact of the optimizations.
\end{abstract}


\section{Introduction}
\label{introduction-section}

The problems of activity recognition~\cite{yu2011story},
surveillance~\cite{yu2010cyber,uddin2018ambient,ramos2021daily}, suspicious
and/or anomalous behavior detection~\cite{rowe2005detecting}, fault
diagnosis~\cite{sampath1995diagnosability,cassez2008fault}, and task
monitoring~\cite{wang18}\,---despite applying to distinct scenarios---\,all
involve the challenge of  analyzing behavior on the basis of streams of
observations from sensors.  Sensor selection and activation problems (as
studied by~\cite{rahmani2021sensor,cassez2008fault,yin2017minimization,
wang2018optimizing}) are concerned with selecting a set of sensors to provide
\emph{sufficient} information to reach conclusions that are both unequivocal
and correct.
Yet, \emph{too much} information may be
detrimental\,---\,for instance, in elder care and independent living applications
(cf.~\cite{uddin2018ambient}), capturing or divulging sensitive\slsh
inappropriate information could calamitous enough to
be considered a showstopper.

\newcommand{\region}[1]{\scalebox{0.8}{{\textsf{\textsc{#1}}}}\xspace}
\newcommand{\pool}{\region{pool}}
\newcommand{\study}{\region{study}}
\newcommand{\bedroom}{\region{bedroom}}
\newcommand{\bathroom}{\region{bathroom}}
\newcommand{\kitchen}{\region{kitchen}}
\newcommand{\ld}{\region{lounge/dining}}
\newcommand{\fyard}{\region{front yard}}
\newcommand{\byard}{\region{backyard}}
\newcommand{\person}{Myra\xspace}

\begin{figure}[t!]
\vspace*{-4pt}
\begin{minipage}{0.59\linewidth}
  \includegraphics[trim=0pt 0pt 690pt 0pt, clip,width=\textwidth]{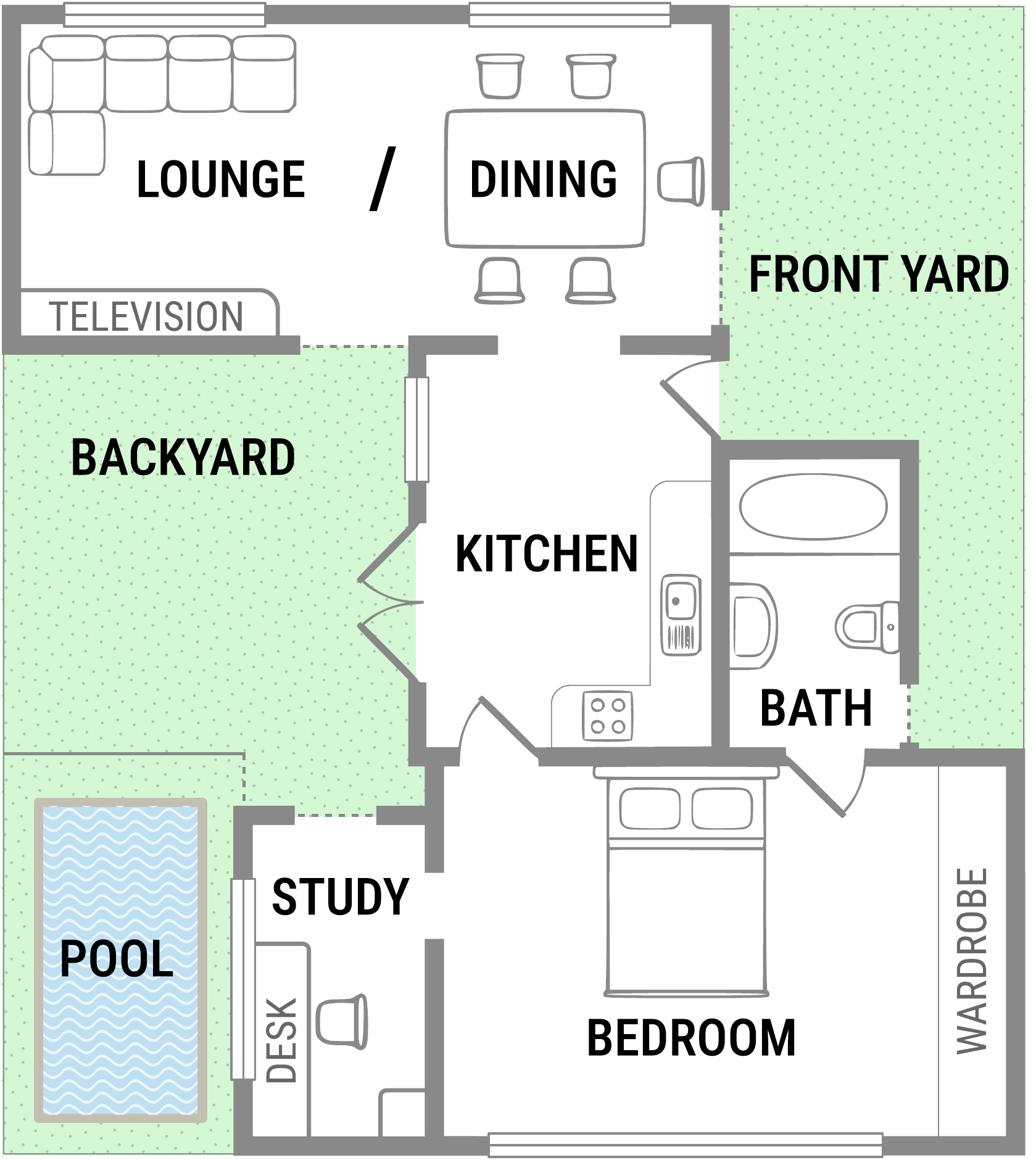}
\end{minipage}
\hfill
\begin{minipage}{0.395\linewidth}
  \caption{\person's assistive living space wherein occupancy detectors can be employed within
            contiguous areas, 
            corresponding here to eight regions: the \pool, \study, \bedroom, \bathroom, \kitchen, \ld, \byard,~and \fyard.
            Different subsets of detectors result in quite distinct sensing granularities: inadequate sensors will mean that the system is incapable of obtaining information needed about the state of the environment,
            too many sensors are invasive and cause privacy concerns.
  }
  \label{PlanWorld}
\end{minipage}
\vspace*{-9pt}
\end{figure}

As a concrete motivating example, consider the house shown in Figure~\ref{PlanWorld}.
Suppose that it is to be turned, via automation, into a `smart home' to serve
as an assisted living space for an elderly person named \person.
Assume that occupancy sensors triggered by physical presence can be
placed in each labelled, contiguous area. 
We might program a system that uses such sensors to track important properties
related to \person's wellness and health goals so that a 
carer can be notified if something is amiss.  
For instance, suppose that to
help fend off dementia, \person does a post-lunch crossword in her study.
To
determine that \person has moved through the house and ended up in the study
doing her crossword, a single occupancy sensor, \study, suffices. 
Unfortunately, when the pool has just been cleaned, 
the chlorine negatively affects \person's sinuses.
To ensure
that she ends up in the study \emph{and} never visits the swimming pool, we
need 2 sensors (\study, \pool).  The increase makes intuitive sense: we are, after all,
now asking for more information about the activity than before.  Notice the 3
kinds of behavior that we can now discriminate between: ones that are both
safe and desirable (never visiting the pool and ending in the study), ones that
are safe but undesirable (never visiting the pool, but also not ending in the
study), and ones that are not safe (visiting the chlorinated pool).

Dinner time is next. We wish to have enough sensing power to tell that \person
has ended up the lounge\slsh dining area, having spent some time in the
kitchen. A pair of sensors (\kitchen, \ld) will do; and to include the study and pool,
these are in addition to the previous 2, giving 4 in total.  But alas, now
\person is annoyed: very occasionally, she enjoys a perfectly
innocent midnight snack and she feels that any sensor that discloses when she has
raided the fridge (and even the frequency of such forays!) is too
invasive.\footnotemark~  She requires that we guarantee that those evenings in
which her bedroom is occupied continuously shall appear identical to those in
which one (or more)
incursions have been made into the kitchen. 
\footnotetext{Her concern is
not misplaced, given the increasing number of attacks on cloud services in
recent years~\cite{chou2013security} from which stored data may be leaked.}

Her request, along with the previous requirements, can be met with 5 sensors (\ld,
\study, \byard, \fyard, \pool). 
Though simplistic, this example illustrates an important idea\,---\,it is not 
enough to reduce the number of sensors to increase privacy, but that sometimes
it may be necessary to activate a different and higher cardinality combination
of sensors to protect sensitive information.

The present paper re-visits the sensor selection model introduced in the IROS'21 paper of 
Rahmani et al.~\cite{rahmani2021sensor},  
advancing and elaborating upon it in order to treat the sort of
problem just described.
In that paper, the authors consider the setting where a
claimant asserts that (future) movements within an
environment will adhere to a given itinerary. Then the objective is to select,
from some set of sensors at specific locations,
a small subset that will detect any deviations from this claim.  
One of present paper's key advances is the ability
to constrain the information obtained from sensors, in
order to meet privacy and non-disclosure requirements.
Further, 
the present paper generalizes the problem so that multiple itineraries are
considered and, consequently, the
objective becomes rather more subtle. In the prior work, the problem is
to select sensors that single out the claimed itinerary from all other
activity; now, when  
closely-related itineraries are provided, the sensors selected must have
adequate resolving power distinguish fine-grain differentiations (recall the 3 kinds of behavior above).

This paper establishes the computational hardness of sensor selection and
optimization under this richer setting (see Section~\ref{hardness}) giving a nuanced 
description of its relation to the constraints introduced to modulate the collected information. Then,
although the problem  is worst-case intractable in general, 
we introduce an exact method
in Section~\ref{algorithm-description}
which treats the sensor selection problem using automata theoretic tools (an approach quite
distinct from the ILP of~\cite{rahmani2021sensor}).  
Multiple itineraries are provided as input and their
interrelationships express constraints\,---\,we examine opportunities to exploit aspects of this structure, which leads 
us to propose some optimizations.  
The empirical results we present in Section~\ref{experimental-results} show that the improvements obtained from the
optimizations are significant, and 
demonstrate how they help improve the scalability of our proposed solution.



\shortenXor{}{Some detail has necessarily been omitted,
the reader is encouraged to refer to \cite{phatak23sensorfull} for the full
authoritative version of the paper.}

\section{Related works}
\label{related-works}

So far, no single model for robotic privacy has yet emerged. 
A useful taxonomy dealing with privacy for robots (and associated intelligent
systems) appears in~\cite{rueben2017taxonomy}. 
Perhaps most visible candidate is
that of differential privacy, used by such
works as~\cite{prorok17privacy, cortes2016differential}. 
There, the underlying formulation builds upon a notion of nearness (originally
with a static database of multiple records), and 
is a less natural fit to treat
the problem of altering the processes by
which data are acquired.  The present work tackles how
privacy (of
even a single entity) may be preserved without any need for addition of noise if
they can exert some degree of control on the tools used to collected that data. 

The idea of obscuring or concealing information is another candidate and is prevalent in the control
community's notion of opacity: an excellent overview for Discrete Event Systems
(DES) is by Jacob, Lesage, and Faure~\cite{jacob2016overview}.
A DES is said to be opaque if a secret has some level of indistinguishability,
a concept very close to the conflation constraints we
define in Section~\ref{problem-statement}. For further reading in the role of
opacity in DES, the reader is referred to \cite{lin2011opacity},
\cite{zaytoon2013overview} and \cite{lafortune2018history}.

Previous work by Masopust and Yin affirms that the properties of detectability and opacity are worst case intractable in general\cite{masopust2019complexity}.
In particular, Cassez et. al.\cite{cassez2012synthesis} showed that determining the opacity of static and dynamic masks was \PSPACE-Complete via formulation of so-called `state-based' and `trace-based' opacities.
In our work, importantly, simply obfuscating states is not enough, as how that
particular state was reached also plays a role.  A second factor
which differentiates our work is that we allow specifications of constraints
between two specified behaviors, instead of making them binary, one-versus-all
decisions.
An important subtlety,
moreover, is that the conflation constraints are directed (cf., also~\cite{okane15discreet}), implying that a more fine grained designation of obfuscation is allowed without necessarily running in both directions.
Thus, we find it more suitable to reduce directly from the inclusion problem than universality.

\section{Problem statement and definitions}
\label{problem-statement}

The environment in which some agent of interest moves is modelled as a discrete structure called the \emph{world graph}:
\begin{definition} [World Graph~\cite{rahmani2021sensor}]

A world graph is an edge-labelled, directed multigraph $\mathcal{G} = (V, E, \src, \tgt, v_0, S,\mathbb{Y},\lambda)$:
\shortenXor{
\begin{itemize}
\item $V$ is a non-empty vertex set,
\item $E$ is a set of edges,
\item $\src : E \to V$ and $\tgt : E \to V$ are source and target
functions, respectively, identifying a source vertex
and target vertex for each edge,
\item $v_0 \in V$ is an initial vertex,
\item $S = \{s_1, s_2,\dots , s_k\}$ is a nonempty finite set of sensors,
\item $\mathbb{Y} = \{Y_{s_1}, Y_{s_2},\dots , Y_{s_k}\}$ is a collection of mutually
disjoint event sets associated to each sensor, and
\item $\lambda : E \to \pow{Y_{s_1} \cup Y_{s_2} \cup\dots\cup Y_{s_k}}$ is a labelling function, which assigns to each edge, a world-observation a set of events.
\end{itemize}
(Here $\pow{X}$, the powerset, denotes all the subsets of $X$.)
}{%
$V$ is a non-empty vertex set;
$E$ is a set of edges;
$\src : E \to V$ and $\tgt : E \to V$ are source and target
functions, respectively, identifying a source vertex
and target vertex for each edge;
$v_0 \in V$ is an initial vertex;
$S = \{s_1, s_2,\dots , s_k\}$ is a nonempty finite set of sensors;
$\mathbb{Y} = \{Y_{s_1}, Y_{s_2},\dots , Y_{s_k}\}$ is a collection of mutually
disjoint event sets associated to each sensor; 
$\lambda : E \to \pow{Y_{s_1} \cup Y_{s_2} \cup\dots\cup Y_{s_k}}$ is a labelling function, which assigns to each edge, a world-observation a set of events.
(Here powerset $\pow{X}$ denotes all the subsets of $X$.)}

\end{definition}

The usefulness of the world graph is that it governs two major aspects of the agent's locomotion: how it may move, and
what would happen if it moved in a certain way.
The agent is known to start its movements at $v_0$ and take connected edges. 

However, the agent cannot make any transitions that are not permitted by the world graph. \person, for example, cannot jump from the \bedroom to the \ld without first going through the \kitchen.
Thus, the collection of all paths that can physically be taken by the agent is defined as follows:

\begin{definition} [Walks~\cite{rahmani2021sensor}]

A string $e_1e_2\dots e_n \in E^*$ is a walk on the world graph if and only if $\srcfunc{e_1} = v_0$ and for all $i \in \{1, \dots, n-1\}$ we have that $\tgtfunc{e_i} = \srcfunc{e_{i+1}}$. The set of all walks over $\mathcal{G}$ is denoted $\Walks(\mathcal{G})$

\end{definition}

Next, we seek to understand what role the sensors play when an agent interacts with the world.
Whenever an edge is crossed, it causes a `sensor response' described by the
label on that edge: those sensors which are associated with the sensor values
in the label (and are turned on\slsh selected) will emit those values.  Returning to the home
in Figure~\ref{PlanWorld}, assume there are sensors in the \bedroom and \study
which measure occupancy.  Then, when \person starts in the bedroom and moves to
the study, we would obtain the event $\{ \bedroom^-, \study^+\}$ for the
transition, with the plus superscript representing an event triggered by detection,
and minus the inverse. The model also allows sensors other
than those which detect occupancy (e.g., non-directed traversals via break beams),
see~\cite{rahmani2021sensor} too. 

To understand the sensor values generated when crossing a single edge where sensors may be turned off, we use a sensor labelling function:

\begin{definition}[Sensor labelling function]
    Let  $\mathcal{G} = (V, E, \src, \tgt, v_0, S,\mathbb{Y},\lambda)$ be a world graph,
    and $M \subseteq S$ a sensor selection from it. 
    For selection $M$,
    the set of all events that could be produced by 
    those sensors 
    will be denoted $\senseUnion{M} = \bigcup_{s \in M} Y_s$.   
    Then the \emph{sensor labelling function} is 
     for all $e \in E$: \vspace*{-10pt}
    \[\lambda_M(e) = 
   \begin{cases}
    \lambda(e) \cap \senseUnion{M} & \text{if }\lambda(e) \cap \senseUnion{M} \neq \emptyset,\\
    \epsilon & \text{otherwise.}
    \end{cases} 
     \]
\end{definition}

(Note that $\epsilon$ here is the standard empty symbol.)
Later in the paper, 
Figure~\ref{3x3Graph} forms an example of an environment with
a world graph whose edges bear appropriate sensor labels.

Now, we may formally define the signature function for a walk and a given sensor set as follows:

\begin{definition} [Signature of a walk~\cite{rahmani2021sensor}]

For a world graph $\mathcal{G} = (V, E, \src, \tgt, v_0, S,\mathbb{Y},\lambda)$ we define function $\beta_{\mathcal{G}}: \Walks(\mathcal{G}) \times \pow{S} \to (\pow{\senseUnion{S}}\setminus \{\emptyset\})^*$ such that for each $r = e_1e_2\dots e_n \in \Walks(\mathcal{G})$ and $M \subseteq S$, $\beta_{\mathcal{G}}(r, M) = z_1z_2\dots z_n$ in which for each $i \in \{1, \dots, n\}$, we have that $z_i = \lambda_M(e_i)$.

\end{definition}

The behavior of the agent will be specified with respect to a 
given world graph and these specifications will describe sequences of edges
the agent may decide to take in the world graph. 
Following the convention of~\cite{rahmani2021sensor}, each is called an 
itinerary.
Subsequent definitions will involve the use of multiple itineraries
in order to constrain what information about the agent's behavior the sensors are
allowed to obtain.


\begin{definition} [Itinerary DFA~\cite{rahmani2021sensor}]

An itinerary DFA over a world graph $\mathcal{G} = (V, E, \src, \tgt, v_0, S,\mathbb{Y},\lambda)$ is a DFA $\mathcal{I} = (Q, E, \delta, q_0, F)$ in which $Q$ is a finite set of states; $E$ is the alphabet; $\delta : Q \times E \to Q$ is the transition function; $q_0$ is the initial state; and $F$ is the set of accepting (final) states.

\end{definition}

With the basic elements given, the next four definitions formalize the different classes of constraints we desire a set of sensors to satisfy. 
Conflation constraints allow one type of behavior to `appear' similar to another, 
while discrimination constraints specify that two behaviors must be distinguishable. 

\begin{definition} [Conflation constraint]
\label{conflation-constraint}
    A conflation constraint on a world graph $\mathcal{G}$ is an ordered pair of itineraries $(\mathcal{I}_a, \mathcal{I}_b)^{\together}$.
\end{definition}

\begin{definition} [Discrimination constraint]
\label{discrimination-constraint}
    A discrimination constraint on a world graph $\mathcal{G}$ is an unordered pair of itineraries $[\mathcal{I}_1, \mathcal{I}_2]^{\apart}$.
\end{definition}

Both types will combined within a graph:

\begin{definition} [Discernment designation]
A \emph{discernment designation} is a 
mixed graph $\mathcal{D}=(I, I_D, I_C)$, with vertices $I$ being a collection of itineraries, along with undirected edges $I_D$ which are a set of discrimination constraints, and arcs (directed edges) $I_C$ which are a set of conflation constraints.
\end{definition}

And, finally, we can state what a satisfying selection entails:

\begin{definition}[Satisfying sensor selection]
\label{def:sats}
Given some discernment designation $\mathcal{D}$,
    a sensor set $M \subseteq S$ is a \emph{satisfying sensor selection for $\mathcal{D}=(I, I_D, I_C)$} if and only if both of the following conditions hold:
    \begin{itemize}
        \item For each $[\mathcal{I}_1, \mathcal{I}_2]^{\apart} \in I_D$ we have that there exist no $w_1 \in \Walks(\mathcal{G}) \cap \mathcal{L}(\mathcal{I}_1)$ and $w_2 \in \Walks(\mathcal{G}) \cap \mathcal{L}(\mathcal{I}_2)$ where $\beta_\mathcal{G}(w_1, M) = \beta_\mathcal{G}(w_2, M)$.
        \item For each $(\mathcal{I}_a, \mathcal{I}_b)^{\together} \in I_C$ we have that for every $w \in \Walks(\mathcal{G}) \cap \mathcal{L}(\mathcal{I}_a)$, there exists $c_w \in \Walks(\mathcal{G}) \cap \mathcal{L}(\mathcal{I}_b)$ where $\beta_\mathcal{G}(w, M) = \beta_\mathcal{G}(c_w, M)$.
    \end{itemize}
\end{definition}

In the above definition, the `$\apart$' constraints correspond to 
\emph{discrimination} requirements, while `$\together$' require
\emph{conflation}.
The importance of the set intersections is that the only things that can really happen are 
walks on the world graph. 
When there is a discrimination constraint, there are no walks from the one itinerary 
that can be confused with one from the other itinerary. 
When there is a conflation constraint, any walk from the first itinerary has
at least one from the second that appears identical. 
Conflation models privacy in the following sense: any putative claim that the agent followed one itinerary can be countered by arguing,  just as plausibility on the basis of the sensor readings, that it followed the other itinerary.
While the  discrimination constraint
is symmetric, the second need not be. (Imagine:
$\{\beta_\mathcal{G}(w, M) | w \in \Walks(\mathcal{G}) \cap \mathcal{L}(\mathcal{I}_1)\} = \{a,b,c,d\}$ while
$\{\beta_\mathcal{G}(w', M) | w' \in \Walks(\mathcal{G}) \cap \mathcal{L}(\mathcal{I}_2)\} = \{a,b,c,d,e\}$. Then 
$(\mathcal{I}_1,\mathcal{I}_2)^{\together}$ is possible, while
$(\mathcal{I}_2,\mathcal{I}_1)^{\together}$ is not.)

Now, we are ready to give the central problem of the paper:

\decproblem{ Minimal sensor selection to accommodate a discernment designation in itineraries (MSSADDI)}
{A world graph 
$\mathcal{G}$,
a discernment designation 
$\mathcal{D}$,
and a natural number $k\in\mathbb{N}$.}
{A satisfying sensor selection \mbox{$M \subseteq S$} for $\mathcal{D}$ on $\mathcal{G}$ with $|M| \leq k$, or `\textsc{Infeasible}' if 
none exist.
}

\section{Signature Automata}
\label{signature-automata}

To understand how we may begin solving MSSADDI and what its theoretical complexity is, we introduce the concept of a signature automaton.
Signature automata are produced from the product automata of an itinerary with the world graph:

\begin{definition} [Product automaton~\cite{rahmani2021sensor}]
Let  $\mathcal{G} = (V, E, \src, \tgt, v_0, S,\mathbb{Y},\lambda)$ be a world graph and 
$\mathcal{I} = (Q, E, \delta, q_0, F)$ be an intinerary DFA.
The product $\mathcal{P}_{\mathcal{G}, \mathcal{I}}$ is 
a partial DFA $\mathcal{P}_{\mathcal{G}, \mathcal{I}} = (Q_\mathcal{P}, E, \delta_{\mathcal{P}}, q_0^{\mathcal{P}}, F_{\mathcal{P}})$ with
\shortenXor{
\begin{itemize}
   \item $Q_{\mathcal{P}} = Q \times V$,
   \item $\delta_{\mathcal{P}}: Q_{\mathcal{P}} \times E \rightarrow 
   Q_{\mathcal{P}} \cup \{\perp\}$ is a function such that for each $(q, v) \in Q_{\mathcal{P}}$ and $e \in E$, 
   $\delta_{\mathcal{P}}((q, v), e)$ is defined to be $\perp$ if 
   $\srcfunc{e} \neq v$, otherwise, 
   $\delta_{\mathcal{P}}((q, v), e) = (\delta(q, e), \tgtfunc{e})$,
   \item $q_0^{\mathcal{P}}=(q_0, v_0)$, and
   \item $F_{\mathcal{P}} = F \times V$.
\end{itemize}
}{
$Q_{\mathcal{P}} = Q \times V$;
$\delta_{\mathcal{P}}: Q_{\mathcal{P}} \times E \rightarrow Q_{\mathcal{P}} \cup \{\perp\}$ is a function such that for each $(q, v) \in Q_{\mathcal{P}}$ and $e \in E$; 
   $\delta_{\mathcal{P}}((q, v), e)$ is defined to be $\perp$ if 
   $\srcfunc{e} \neq v$, otherwise, 
   $\delta_{\mathcal{P}}((q, v), e) = (\delta(q, e), \tgtfunc{e})$;
$q_0^{\mathcal{P}}=(q_0, v_0)$, and
$F_{\mathcal{P}} = F \times V$.
}
\end{definition}

The language of this product automaton, as a DFA, is the collection of (finite-length) sequences from $E$ that can be traced starting at 
$q_0^{\mathcal{P}}$, never producing a $\perp$, and which arrive
at some element in $F_{\mathcal{P}}$.
The language recognized is the set of walks that are within the itinerary $\mathcal{I}$, i.e., $\mathcal{L}(\mathcal{P}_{\mathcal{G}, \mathcal{I}}) = \Walks(\mathcal{G}) \cap \mathcal{L}(\mathcal{I})$.


\begin{definition} [Signature automaton]

Let  $\mathcal{G} = (V, E, \src, \tgt, v_0, S,\mathbb{Y},\lambda)$ be a world graph, 
let $M \subseteq S$ be a sensor selection on it, 
$\mathcal{I} = (Q, E, \delta, q_0, F)$ be an itinerary DFA, and
$\mathcal{P}_{\mathcal{G}, \mathcal{I}}$ be their product. A signature automaton $\mathcal{S}_{\mathcal{G}, \mathcal{I}, M} = (Q_\mathcal{P}, \Sigma, \delta_{\mathcal{S}}, q_0^{\mathcal{P}}, F_{\mathcal{P}})$ is a nondeterministic finite automaton with $\epsilon$-moves (NFA-$\epsilon$) with

\begin{itemize}

\item $\Sigma = \left\{ \lambda_M(e) \;|\; e \in E,
 \lambda_M(e) \neq \epsilon \right\}$

\item $\delta_{\mathcal{S}}: Q_{\mathcal{P}} \times \Sigma 
 \cup \{\epsilon\} \rightarrow \pow{Q_{\mathcal{P}}}$ is a function defined for each $(q, v) \in Q_{\mathcal{P}}$ and $\sigma \in \Sigma \cup \{ \epsilon \}$ such that 
\begin{multline*}
    \delta_{\mathcal{S}}\big((q,v), \sigma\big) =  
    \Big\{\delta_{\mathcal{P}}\big((q, v), e\big) \;\Big| e \in E, \\ \delta_{\mathcal{P}}\big((q, v), e\big) \neq \perp, \lambda_M(e) = \sigma\Big\}.
\end{multline*}
\end{itemize}
\end{definition}


The signature automaton produces all the signatures that could result from following a path in the world graph conforming to the given itinerary. Formally, we have the following:

\begin{lemma} \label{signlang}
%

For world graph 
$\mathcal{G} = (V, E, \src, \tgt, v_0, S,\mathbb{Y},\lambda)$, 
sensor selection $M \subseteq S$,
and itinerary $\mathcal{I} = (Q, E, \delta, q_0, F)$, if
their signature automaton is $\mathcal{S}_{\mathcal{G}, \mathcal{I}, M}$, then:
$$\mathcal{L}(\mathcal{S}_{\mathcal{G}, \mathcal{I}, M}) = \left\{\beta_{\mathcal{G}}(w, M) \;|\; w \in \Walks(\mathcal{G}) \cap \mathcal{L}(\mathcal{I})\right\}.$$
    
\end{lemma}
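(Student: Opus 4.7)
The plan is to prove the stated set equality by showing mutual inclusion, and the key observation is that the signature automaton $\mathcal{S}_{\mathcal{G}, \mathcal{I}, M}$ is built on exactly the same state set and accepting set as the product automaton $\mathcal{P}_{\mathcal{G}, \mathcal{I}}$, with each transition on edge $e$ simply relabelled by $\lambda_M(e)$ (becoming an $\epsilon$-transition when the sensors produce nothing). Together with the remark just after the product automaton definition, that $\mathcal{L}(\mathcal{P}_{\mathcal{G}, \mathcal{I}}) = \Walks(\mathcal{G}) \cap \mathcal{L}(\mathcal{I})$, this lets us turn runs of one automaton into runs of the other in a length-preserving way at the level of transitions.

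For the inclusion $\{\beta_{\mathcal{G}}(w, M) \mid w \in \Walks(\mathcal{G}) \cap \mathcal{L}(\mathcal{I})\} \subseteq \mathcal{L}(\mathcal{S}_{\mathcal{G}, \mathcal{I}, M})$, I would take any $w = e_1 e_2 \dots e_n \in \Walks(\mathcal{G}) \cap \mathcal{L}(\mathcal{I})$ and let $p_0, p_1, \dots, p_n$ be the corresponding accepting run of $\mathcal{P}_{\mathcal{G}, \mathcal{I}}$, so $p_0 = q_0^{\mathcal{P}}$, $p_n \in F_{\mathcal{P}}$, and $\delta_{\mathcal{P}}(p_{i-1}, e_i) = p_i \neq \perp$ for every $i$. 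By the defining formula for $\delta_{\mathcal{S}}$, this immediately yields $p_i \in \delta_{\mathcal{S}}(p_{i-1}, \lambda_M(e_i))$, so $p_0, p_1, \dots, p_n$ is also an accepting run of the NFA-$\epsilon$ whose label sequence is $\lambda_M(e_1) \lambda_M(e_2) \cdots \lambda_M(e_n)$, and, interpreting the $\epsilon$'s as the empty symbol during concatenation, this string is exactly $\beta_{\mathcal{G}}(w, M)$.

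For the reverse inclusion I would argue contrapositively by reading an accepting run of $\mathcal{S}_{\mathcal{G}, \mathcal{I}, M}$ back into a walk. Given an accepting computation $p_0 \xrightarrow{\sigma_1} p_1 \xrightarrow{\sigma_2} \cdots \xrightarrow{\sigma_n} p_n$ with each $\sigma_i \in \Sigma \cup \{\epsilon\}$, $p_0 = q_0^{\mathcal{P}}$, and $p_n \in F_{\mathcal{P}}$, the definition of $\delta_{\mathcal{S}}$ guarantees a choice of edge $e_i \in E$ with $\delta_{\mathcal{P}}(p_{i-1}, e_i) = p_i$ and $\lambda_M(e_i) = \sigma_i$. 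Stringing these edges together gives $w = e_1 e_2 \dots e_n$ which, by construction, is accepted by $\mathcal{P}_{\mathcal{G}, \mathcal{I}}$ and therefore lies in $\Walks(\mathcal{G}) \cap \mathcal{L}(\mathcal{I})$; moreover $\beta_{\mathcal{G}}(w, M) = \sigma_1 \sigma_2 \cdots \sigma_n$, which is the string the run accepted.

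The only delicate point, and the one I expect to be the main obstacle in writing the proof up cleanly, is keeping bookkeeping consistent between transitions and symbols: a run of the NFA-$\epsilon$ of ``length'' $n$ (in transitions) can accept a string strictly shorter than $n$ due to $\epsilon$-moves, while the walk it corresponds to always has exactly $n$ edges. Matching this against the definition of $\beta_{\mathcal{G}}$, which emits one (possibly empty) block per edge and then concatenates them as strings, resolves the discrepancy and makes the two constructions above inverse to each other at the level of transition sequences; everything else reduces to routine induction on $n$.
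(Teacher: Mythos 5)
Your proposal is correct and follows essentially the same route as the paper's own (much terser) proof: both directions are established by the same correspondence between accepting runs of the product automaton $\mathcal{P}_{\mathcal{G},\mathcal{I}}$ and accepting runs of the signature automaton $\mathcal{S}_{\mathcal{G},\mathcal{I},M}$, relabelling each edge $e$ by $\lambda_M(e)$ one way and choosing a witnessing edge from the definition of $\delta_{\mathcal{S}}$ the other way. Your explicit handling of the $\epsilon$-transition bookkeeping is a welcome elaboration of what the paper leaves implicit (the only quibble is that you call the reverse inclusion ``contrapositive'' when the argument you actually give is direct).
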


\begin{proof}

For all $w \in \Walks(\mathcal{G}) \cap \mathcal{L}(\mathcal{I})$ there is a unique sequence of states $q_0^\mathcal{P}, \gobble{q_1^\mathcal{P},} \dots , q_n^\mathcal{P}$ in $\mathcal{P}_{\mathcal{G}, \mathcal{I}}$ such that $q_n^\mathcal{P} \in F_\mathcal{P}$. Following that sequence through the signature automaton returns \gobble{the} signature $\beta_{\mathcal{G}}(w, M)$.
Similarly, any string that is accepted by $\mathcal{S}_{\mathcal{G}, \mathcal{I}, M}$ has a sequence of states $q_0^\mathcal{P}, \gobble{q_1^\mathcal{P},} \dots , q_n^\mathcal{P}$ in $\mathcal{S}_{\mathcal{G}, \mathcal{I}, M}$ such that $q_n^\mathcal{P} \in F_\mathcal{P}$. Following those states through $\mathcal{P}_{\mathcal{G}, \mathcal{I}}$ returns the walk conforming to the itinerary which produced it.
\end{proof}

\gobblexor{Note that the manner in which the signature automaton was produced was simply to replace the alphabet $E$ of the product automaton with the alphabet $\Sigma$.}{Note that signature automaton simply replaces the alphabet $E$ of the product automaton with the alphabet $\Sigma$.}
This introduces nondeterminism in the automaton because two outgoing edges from a vertex in the world graph may produce the same (non-empty) sensor values. Moreover, certain transitions may be made on the empty symbol if no sensor values are produced upon taking an edge in the world graph too.

\gobblexor{The usefulness of the preceding structures becomes clearer from the lemmas that follow.}{The preceding is useful owing to the next pair of lemmas.}

\begin{lemma}
Given world graph $\mathcal{G} = (V, E, \src, \tgt, v_0, S,\mathbb{Y},\lambda)$ and 
itinerary DFAs:
$\mathcal{I}^1 = (Q^1, E, \delta^1, q_0^1, F^1)$ and
$\mathcal{I}^2 = (Q^2, E, \delta^2, q_0^2, F^2)$,
a subset  of sensors $M \subseteq S$ is a satisfying sensor selection
for constraint discrimination of
itineraries
$\mathcal{I}^1$ and $\mathcal{I}^2$ 
if and only if $\mathcal{L}(\mathcal{S}_{\mathcal{G}, \mathcal{I}^1, M}) \cap \mathcal{L}(\mathcal{S}_{\mathcal{G}, \mathcal{I}^2, M}) = \emptyset$.
\end{lemma}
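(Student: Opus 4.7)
The plan is to prove both directions by directly unfolding Definition~\ref{def:sats} and rewriting the two languages using Lemma~\ref{signlang}. The signature language lemma already does the heavy lifting: it characterizes $\mathcal{L}(\mathcal{S}_{\mathcal{G},\mathcal{I},M})$ as exactly the set of signatures $\{\beta_{\mathcal{G}}(w,M) \mid w\in\Walks(\mathcal{G})\cap\mathcal{L}(\mathcal{I})\}$, so the two objects compared in the lemma are essentially the same set, just viewed through different vocabularies.

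Concretely, I would argue both directions via contrapositive, which is cleaner than chasing existentials forward. For the ``only if'' direction, assume the intersection $\mathcal{L}(\mathcal{S}_{\mathcal{G},\mathcal{I}^1,M})\cap\mathcal{L}(\mathcal{S}_{\mathcal{G},\mathcal{I}^2,M})$ contains some string $z$. Apply Lemma~\ref{signlang} to each automaton: membership of $z$ in $\mathcal{L}(\mathcal{S}_{\mathcal{G},\mathcal{I}^j,M})$ yields a walk $w_j\in\Walks(\mathcal{G})\cap\mathcal{L}(\mathcal{I}^j)$ with $\beta_\mathcal{G}(w_j,M)=z$ for $j\in\{1,2\}$. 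Then $\beta_\mathcal{G}(w_1,M)=\beta_\mathcal{G}(w_2,M)$, violating the discrimination condition in Definition~\ref{def:sats}, so $M$ does not satisfy the constraint.

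For the ``if'' direction, assume the discrimination condition fails, i.e., there exist walks $w_1\in\Walks(\mathcal{G})\cap\mathcal{L}(\mathcal{I}^1)$ and $w_2\in\Walks(\mathcal{G})\cap\mathcal{L}(\mathcal{I}^2)$ with $\beta_\mathcal{G}(w_1,M)=\beta_\mathcal{G}(w_2,M)=:z$. Lemma~\ref{signlang} then places $z$ in both $\mathcal{L}(\mathcal{S}_{\mathcal{G},\mathcal{I}^1,M})$ and $\mathcal{L}(\mathcal{S}_{\mathcal{G},\mathcal{I}^2,M})$, so their intersection is nonempty. Combining the contrapositives of the two implications gives the biconditional.

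There isn't really a hard step here, since the symmetry of the discrimination constraint matches the symmetry of set intersection, and Lemma~\ref{signlang} was designed precisely to translate between the two representations. The only thing worth being careful about is that the discrimination condition in Definition~\ref{def:sats} is stated for unordered pairs, which matches $\mathcal{L}(\cdot)\cap\mathcal{L}(\cdot)$'s symmetry without any extra work; no separate argument is needed for the two orderings of $\mathcal{I}^1,\mathcal{I}^2$. The entire proof should therefore fit in a few lines, being essentially a bookkeeping exercise over the earlier lemma.
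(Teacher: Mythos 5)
Your proof is correct and follows essentially the same route as the paper's: both directions reduce to Lemma~\ref{signlang}'s characterization of the signature automaton's language, with the intersection being nonempty exactly when two walks from the respective itineraries share a signature. The paper argues one direction forward and the other via the same contrapositive-style witness you use, so the two proofs are interchangeable.
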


\begin{proof}

Assume that $M$ satisfies the constraint $[\mathcal{I}_1, \mathcal{I}_2]^{\apart}$. This implies that there exist no $w_1$ and $w_2$, with $w_1 \in \Walks(\mathcal{G}) \cap \mathcal{L}(\mathcal{I}_1)$ and $w_2 \in \Walks(\mathcal{G}) \cap \mathcal{L}(\mathcal{I}_2)$, where $\beta_\mathcal{G}(w_1, M) = \beta_\mathcal{G}(w_2, M)$. The previous fact along with Lemma \ref{signlang} implies $\mathcal{L}(\mathcal{S}_{\mathcal{G}, \mathcal{I}^1, M}) \cap \mathcal{L}(\mathcal{S}_{\mathcal{G}, \mathcal{I}^2, M}) = \emptyset$. 
The other way: 
if such $w_1$ and $w_2$ can be found, 
then letting $c = \beta_\mathcal{G}(w_1, M) = \beta_\mathcal{G}(w_2, M)$,
we have that 
$\{ c \} \subseteq \mathcal{L}(\mathcal{S}_{\mathcal{G}, \mathcal{I}^1, M}) \cap \mathcal{L}(\mathcal{S}_{\mathcal{G}, \mathcal{I}^2, M})$.
\end{proof}

Notice that if $\mathcal{L}(\mathcal{I}_1) \cap \mathcal{L}(\mathcal{I}_2) \neq
\emptyset$  then any walks $w_1 = w_2$ taken from this intersection must have
 $\beta_\mathcal{G}(w_1, M) = \beta_\mathcal{G}(w_2, M)$. Any two
itineraries with overlapping languages, and whose overlap falls (partly) within the set of walks,  will yield a sensor selection problem that must be infeasible
when these itineraries are given as a discrimination
constraints.

A similar lemma follows for the conflation constraints.

\begin{lemma}
Given world graph $\mathcal{G} = (V, E, \src, \tgt, v_0, S,\mathbb{Y},\lambda)$ and 
itinerary DFAs:
$\mathcal{I}^1 = (Q^1, E, \delta^1, q_0^1, F^1)$ and
$\mathcal{I}^2 = (Q^2, E, \delta^2, q_0^2, F^2)$,
a subset  of sensors $M \subseteq S$ is a satisfying sensor selection
for constraint conflation of
itineraries
$\mathcal{I}^1$ and $\mathcal{I}^2$ 
if and only if $\mathcal{L}(\mathcal{S}_{\mathcal{G}, \mathcal{I}^1, M}) \subseteq \mathcal{L}(\mathcal{S}_{\mathcal{G}, \mathcal{I}^2, M})$.
\end{lemma}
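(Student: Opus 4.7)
The plan is to mimic the structure of the preceding lemma for the discrimination case, since the conflation constraint is defined analogously in terms of signatures of walks, and Lemma~\ref{signlang} already gives us the bridge between signatures and the language of a signature automaton. The statement is an iff, so I will handle each direction separately, and in each case the argument is essentially a translation between the conflation clause of Definition~\ref{def:sats} and the language-inclusion condition, applying Lemma~\ref{signlang} to move between the two formulations.

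For the forward direction, I would assume that $M$ satisfies the conflation constraint $(\mathcal{I}^1, \mathcal{I}^2)^{\together}$ and pick an arbitrary $s \in \mathcal{L}(\mathcal{S}_{\mathcal{G}, \mathcal{I}^1, M})$. By Lemma~\ref{signlang} there is some $w \in \Walks(\mathcal{G}) \cap \mathcal{L}(\mathcal{I}^1)$ with $\beta_\mathcal{G}(w, M) = s$. The conflation hypothesis then furnishes a $c_w \in \Walks(\mathcal{G}) \cap \mathcal{L}(\mathcal{I}^2)$ with $\beta_\mathcal{G}(c_w, M) = s$, and a second appeal to Lemma~\ref{signlang} places $s$ in $\mathcal{L}(\mathcal{S}_{\mathcal{G}, \mathcal{I}^2, M})$. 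Since $s$ was arbitrary, the inclusion follows.

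For the converse, assume $\mathcal{L}(\mathcal{S}_{\mathcal{G}, \mathcal{I}^1, M}) \subseteq \mathcal{L}(\mathcal{S}_{\mathcal{G}, \mathcal{I}^2, M})$ and pick an arbitrary $w \in \Walks(\mathcal{G}) \cap \mathcal{L}(\mathcal{I}^1)$. Lemma~\ref{signlang} puts $\beta_\mathcal{G}(w, M)$ into $\mathcal{L}(\mathcal{S}_{\mathcal{G}, \mathcal{I}^1, M})$, and the assumed inclusion places it into $\mathcal{L}(\mathcal{S}_{\mathcal{G}, \mathcal{I}^2, M})$; invoking Lemma~\ref{signlang} a final time produces a witness $c_w \in \Walks(\mathcal{G}) \cap \mathcal{L}(\mathcal{I}^2)$ with matching signature, which is exactly what the conflation clause of Definition~\ref{def:sats} demands.

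I do not anticipate any real obstacle: the argument is a direct unpacking of definitions glued together by Lemma~\ref{signlang} in both directions. The only subtle point to get right is the asymmetry, namely that for conflation we establish \emph{inclusion} rather than equality or disjointness, matching the remark in the paper that $(\mathcal{I}^1,\mathcal{I}^2)^{\together}$ need not imply $(\mathcal{I}^2,\mathcal{I}^1)^{\together}$; I will be careful to phrase the existential as producing a witness $c_w$ in $\mathcal{I}^2$ for each $w$ in $\mathcal{I}^1$, rather than the other way around, so the direction of the inclusion $\mathcal{L}(\mathcal{S}_{\mathcal{G}, \mathcal{I}^1, M}) \subseteq \mathcal{L}(\mathcal{S}_{\mathcal{G}, \mathcal{I}^2, M})$ comes out correct.
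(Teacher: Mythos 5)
Your proposal is correct and follows essentially the same route as the paper's proof: both directions are a direct unpacking of the conflation clause of Definition~\ref{def:sats} glued to the language-inclusion condition via Lemma~\ref{signlang}. The only cosmetic difference is that the paper phrases the converse contrapositively (a $w$ with no witness $c_w$ yields a string in the first language but not the second), whereas you argue it directly; the content is identical.
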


\begin{proof}

Assume that $M$ satisfies the constraint $(\mathcal{I}_1, \mathcal{I}_2)^{\together}$. This implies that every $w \in \Walks(\mathcal{G}) \cap \mathcal{L}(\mathcal{I}_1)$ has a $c_w \in \Walks(\mathcal{G}) \cap \mathcal{L}(\mathcal{I}_2)$ with $\beta_\mathcal{G}(w, M) = \beta_\mathcal{G}(c_w, M)$. The previous fact along with Lemma \ref{signlang} implies $\mathcal{L}(\mathcal{S}_{\mathcal{G}, \mathcal{I}^1, M}) \subseteq \mathcal{L}(\mathcal{S}_{\mathcal{G}, \mathcal{I}^2, M})$. In the opposite direction, if there exists a $w$ for which no $c_w$ can be found, we know that $\mathcal{L}(\mathcal{S}_{\mathcal{G}, \mathcal{I}^1, M}) \not\subseteq \mathcal{L}(\mathcal{S}_{\mathcal{G}, \mathcal{I}^2, M})$ since $\beta_\mathcal{G}(w, M) \in \mathcal{L}(\mathcal{S}_{\mathcal{G}, \mathcal{I}^1, M})$ but $\beta_\mathcal{G}(w, M) \not\in \mathcal{L}(\mathcal{S}_{\mathcal{G}, \mathcal{I}^2, M})$.
\end{proof}

\section{Complexity of MSSADDI}
\label{hardness}

\subsection{Background and preliminaries}

Before we prove the hardness of MSSADDI, we state some known facts from automata and complexity theory. 

\begin{lemma}
[Savitch's Theorem \cite{savitch1970relationships}]
\label{savitch-theorem}
In the context of complexity classes, we have that $\PSPACE = \NPSPACE$.
\end{lemma}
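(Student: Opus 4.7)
The plan is to prove the two inclusions separately, with the trivial direction $\PSPACE \subseteq \NPSPACE$ following immediately because every deterministic Turing machine is a nondeterministic one in which no step ever branches. The substantive content is the reverse inclusion, for which I will give the standard divide-and-conquer simulation due to Savitch.

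Concretely, I would fix an arbitrary language $L \in \NPSPACE$, decided by a nondeterministic Turing machine $N$ whose work-tape usage on inputs of length $n$ is bounded by some polynomial $s(n)$. A configuration of $N$ on such an input can be encoded in $O(s(n))$ bits, so the total number of configurations is at most $2^{c\,s(n)}$ for some constant $c$. Without loss of generality, $N$ has a unique accepting configuration $c_{\text{acc}}$ (clear the tape before halting). Deciding $L$ therefore reduces to the reachability question $\textsc{Reach}(c_{\text{start}}, c_{\text{acc}}, 2^{c\,s(n)})$, where $\textsc{Reach}(u,v,t)$ asks whether $N$ can transition from configuration $u$ to configuration $v$ in at most $t$ steps.

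The core step is to implement $\textsc{Reach}$ by a recursive procedure that, for $t \geq 2$, iterates over every candidate midpoint configuration $m$ and returns \emph{true} iff both $\textsc{Reach}(u,m,\lceil t/2\rceil)$ and $\textsc{Reach}(m,v,\lfloor t/2\rfloor)$ hold, with base case $t \leq 1$ checked directly from the transition relation of $N$. Crucially, the loop over midpoints reuses the same $O(s(n))$ bits of storage for $m$, and the two recursive calls are made sequentially, also reusing space. Thus each activation record occupies $O(s(n))$ space, and because the step budget halves with each call the recursion depth is $O(\log 2^{c\,s(n)}) = O(s(n))$. Summing over the stack gives total deterministic space $O(s(n)^2)$, which is polynomial, establishing $L \in \PSPACE$.

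The main obstacle to be careful about is the space accounting: one must verify that the midpoint enumeration and the check of the base transition relation truly reuse storage rather than accumulating it, and that the configuration bound $2^{c\,s(n)}$ suffices so that a halting accepting computation of $N$ is always witnessed within the step budget. Once these bookkeeping details are settled, combining the two inclusions yields $\PSPACE = \NPSPACE$, as claimed.
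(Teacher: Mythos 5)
Your argument is the standard divide-and-conquer reachability proof of Savitch's theorem and is correct; the paper itself gives no proof of this lemma, simply citing Savitch's original 1970 paper, which establishes the result by essentially the same recursion you describe. The only bookkeeping point worth making explicit is that the polynomial space bound $s(n)$ is space-constructible (true for polynomials), so the step budget $2^{c\,s(n)}$ can actually be computed and maintained within the allotted space.
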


\begin{lemma}[NFA intersection~\cite{hopcroft2006introduction}]
\label{nfa-intersection}
Given two non-deterministic finite automata (NFAs) $\mathcal{A}\gobble{ = (Q_A, \Sigma, \delta_A, q_0^A, F_A)}$ and $\mathcal{B}\gobble{ = (Q_B, \Sigma, \delta_B, q_0^B, F_B)}$, it can be determined in polynomial time if $\mathcal{L}(\mathcal{A}) \cap \mathcal{L}(\mathcal{B}) = \emptyset$.
\end{lemma}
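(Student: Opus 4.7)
The plan is to invoke the standard product construction for NFAs, noting that the core difficulty is purely a matter of observing that the construction's size remains polynomial and that nondeterminism does not obstruct reachability analysis.

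First, I would write $\mathcal{A} = (Q_A, \Sigma, \delta_A, q_0^A, F_A)$ and $\mathcal{B} = (Q_B, \Sigma, \delta_B, q_0^B, F_B)$ (assuming, without loss of generality, a common alphabet $\Sigma$; if the alphabets differ, take their union and treat missing transitions as undefined). I would then define the product NFA $\mathcal{C} = (Q_A \times Q_B, \Sigma, \delta_C, (q_0^A, q_0^B), F_A \times F_B)$ where $\delta_C\bigl((p,q), \sigma\bigr) = \delta_A(p, \sigma) \times \delta_B(q, \sigma)$. A standard induction on the length of the input string shows that $\mathcal{L}(\mathcal{C}) = \mathcal{L}(\mathcal{A}) \cap \mathcal{L}(\mathcal{B})$: a string is accepted by $\mathcal{C}$ exactly when there exist simultaneous accepting runs in $\mathcal{A}$ and $\mathcal{B}$ on that string.

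Next, I would reduce the emptiness question for $\mathcal{L}(\mathcal{C})$ to a graph reachability problem on the state graph of $\mathcal{C}$: the language is nonempty if and only if some state in $F_A \times F_B$ is reachable from $(q_0^A, q_0^B)$ along edges given by $\delta_C$. Reachability in a directed graph is decidable in time linear in the number of vertices and edges, and the number of vertices of $\mathcal{C}$ is $|Q_A|\cdot|Q_B|$ while the number of edges is bounded by $|Q_A|\cdot|Q_B|\cdot|\Sigma|$ times the maximum out-branching. Hence the whole procedure runs in time polynomial in $|\mathcal{A}| + |\mathcal{B}|$.

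The only genuine subtlety, and the step I would treat with care, is the handling of $\epsilon$-transitions, since the signature automata of Section~\ref{signature-automata} are NFA-$\epsilon$. I would either preprocess by taking the $\epsilon$-closure of each state (a standard polynomial-time operation) to obtain equivalent $\epsilon$-free NFAs before forming the product, or extend $\delta_C$ with $\epsilon$-moves that advance the $\mathcal{A}$-component alone and the $\mathcal{B}$-component alone, then perform reachability over those transitions as well. Either variant preserves both the language-intersection property and the polynomial bound, completing the argument.
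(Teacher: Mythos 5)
Your proposal is correct and follows exactly the standard product-construction argument that the paper's citation to Hopcroft et al.\ points to; the paper itself gives no proof, treating this as a known fact. Your extra care with $\epsilon$-transitions (closure preprocessing or one-sided $\epsilon$-moves in the product) is a sensible addition given that the signature automata are NFA-$\epsilon$, and it does not change the polynomial bound.
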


\begin{lemma}
[NFA inclusion\cite{meyer1972equivalence}]
\label{nfa-inclusion}
Given two non-deterministic finite automata (NFAs) $\mathcal{A}\gobble{ = (Q_A, \Sigma, \delta_A, q_0^A, F_A)}$ and $\mathcal{B}\gobble{ = (Q_B, \Sigma, \delta_B, q_0^B, F_B)}$, it is \PSPACE-Complete to determine if $\mathcal{L}(\mathcal{A}) \subseteq \mathcal{L}(\mathcal{B})$.

\end{lemma}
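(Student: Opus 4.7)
The plan is to establish both membership and hardness, since the claim is \PSPACE-Completeness.

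For membership in \PSPACE, I would first argue that the complementary problem, non-inclusion $\mathcal{L}(\mathcal{A})\not\subseteq\mathcal{L}(\mathcal{B})$, lies in \NPSPACE. A witness of non-inclusion is a string $w$ accepted by $\mathcal{A}$ but rejected by $\mathcal{B}$; such a $w$ may be exponentially long (up to $2^{|Q_B|}$ is enough by the subset construction), so we cannot store it explicitly. The idea is to generate $w$ symbol-by-symbol nondeterministically while maintaining only two pieces of state: a single guessed current state of $\mathcal{A}$ (following a nondeterministic accepting branch), and the \emph{set} of states currently reachable in $\mathcal{B}$, represented as a bit-vector of length $|Q_B|$. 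After each guessed symbol we update both in the natural way. At some nondeterministically chosen point we halt and accept iff the $\mathcal{A}$-state lies in $F_A$ while the $\mathcal{B}$-subset is disjoint from $F_B$. Both records fit in polynomial space, so non-inclusion is in \NPSPACE. By Savitch's theorem (Lemma~\ref{savitch-theorem}), $\NPSPACE=\PSPACE$, and \PSPACE{} is closed under complement, so inclusion itself lies in \PSPACE.

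For \PSPACE-hardness, I would reduce from the NFA universality problem, which is the classical \PSPACE-complete problem in automata theory. Given an NFA $\mathcal{N}$ over alphabet $\Sigma$, let $\mathcal{A}$ be a trivial one-state NFA whose unique state is both initial and final and carries self-loops on every $\sigma\in\Sigma$, so that $\mathcal{L}(\mathcal{A})=\Sigma^{*}$. Set $\mathcal{B}:=\mathcal{N}$. The reduction is clearly polynomial-time computable, and correctness is immediate: $\mathcal{L}(\mathcal{A})\subseteq\mathcal{L}(\mathcal{B})$ iff $\Sigma^{*}\subseteq \mathcal{L}(\mathcal{N})$ iff $\mathcal{N}$ is universal. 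Any algorithm for NFA inclusion therefore decides universality, so inclusion is \PSPACE-hard.

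The main obstacle I would expect is justifying \PSPACE-hardness of universality if one is not permitted simply to cite it. The standard route is a reduction from the acceptance problem for a polynomial-space-bounded deterministic Turing machine $M$ on input $x$. One constructs, in polynomial time from $(M,x)$, an NFA $\mathcal{N}$ over a suitable alphabet of configuration symbols that accepts exactly the strings which are \emph{not} valid accepting computation histories of $M$ on $x$. The NFA nondeterministically guesses the type of defect (wrong initial configuration, locally invalid transition between consecutive configurations, or non-accepting final configuration) and verifies it using only a constant-size local window, which requires polynomially many states. Then $\mathcal{N}$ is universal iff $M$ does not accept $x$, giving \PSPACE-hardness of (non-)universality and, by the reduction above, of inclusion. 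Combining both parts yields \PSPACE-Completeness.
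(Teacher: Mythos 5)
The paper does not prove this lemma at all: it is imported as a background fact, cited to Meyer and Stockmeyer~\cite{meyer1972equivalence}, and used later both as a subroutine bound (Lemma~\ref{MSSADDI-PSPACE}) and as the source of the reduction for Lemma~\ref{MSSADDI-PSPACE-Hard}. Your proposal supplies a self-contained proof, and it is the standard, correct one. The membership half is sound: guessing a distinguishing string symbol-by-symbol while tracking one nondeterministic branch of $\mathcal{A}$ and the on-the-fly determinized state set of $\mathcal{B}$ puts non-inclusion in \NPSPACE, and Savitch plus closure of deterministic space under complement finishes it. (Two trivia: the shortest witness bound is really $|Q_A|\cdot 2^{|Q_B|}$ --- or $2^{|Q_A|}\cdot 2^{|Q_B|}$ if you determinize both --- rather than $2^{|Q_B|}$, and an \NPSPACE{} machine should carry a polynomial-size step counter to force halting; neither affects the argument.) The hardness half, reducing universality to inclusion via $\mathcal{L}(\mathcal{A})=\Sigma^*$, is exactly right, and your sketch of why universality is itself \PSPACE-hard --- an NFA accepting precisely the non-(accepting-computation-histories) of a polyspace machine, detecting one of three defect types with a polynomial-width window and counter --- is the classical Meyer--Stockmeyer/Aho--Hopcroft--Ullman argument. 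It is worth noting that the paper deliberately reduces from inclusion rather than universality for its own hardness result (Lemma~\ref{MSSADDI-PSPACE-Hard}), because its conflation constraints are directed; your observation that inclusion generalizes universality is consistent with that choice.
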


\subsection{Hardness of MSSADDI}

Next, we investigate the hardness of the problem formulated above. 
Since the original original MSSVI problem~\cite{rahmani2021sensor} is
\NP-Complete (it essentially involves a single itinerary and its
complement, one discrimination constraint, and zero conflation constraints), we
naturally expect the problem to be \NP-Hard.  And this is indeed true (though
the direct proof is straightforward and, hence, omitted).
For the full problem, the question is whether the conflation constraints
contribute additional extra complexity. The answer is in the affirmative, under standard computational
complexity assumptions:

\begin{lemma} \label{MSSADDI-PSPACE}
    MSSADDI is in \PSPACE.
\end{lemma}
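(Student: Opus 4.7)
The plan is to describe a polynomial-space procedure that, for each candidate sensor selection $M$, verifies that every constraint in the discernment designation is satisfied; by enumerating (or nondeterministically guessing) $M$ with space reuse, the total space usage stays polynomial. The two earlier lemmas reducing discrimination to NFA intersection-emptiness and conflation to NFA inclusion, together with Lemmas~\ref{nfa-intersection} and~\ref{nfa-inclusion}, do almost all of the heavy lifting.

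First I would note that any $M \subseteq S$ with $|M| \leq k$ can be recorded in $O(|S|)$ bits, and that for any fixed $M$ and itinerary $\mathcal{I}$ the signature automaton $\mathcal{S}_{\mathcal{G}, \mathcal{I}, M}$ has state set $Q \times V$ of size polynomial in the input and transitions that can be computed directly from $\mathcal{G}$, $\mathcal{I}$, and $M$ in polynomial time. Hence each signature automaton required to check a single constraint can either be written down explicitly in polynomial space or, if preferred, queried on the fly with a polynomial-space transition oracle.

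Next, I would walk through each constraint in $\mathcal{D}$ in turn. For a discrimination constraint $[\mathcal{I}_1,\mathcal{I}_2]^{\apart}$, satisfaction is equivalent to $\mathcal{L}(\mathcal{S}_{\mathcal{G},\mathcal{I}_1,M}) \cap \mathcal{L}(\mathcal{S}_{\mathcal{G},\mathcal{I}_2,M}) = \emptyset$, which by Lemma~\ref{nfa-intersection} is decidable in polynomial time, hence in polynomial space. For a conflation constraint $(\mathcal{I}_a,\mathcal{I}_b)^{\together}$, satisfaction is equivalent to $\mathcal{L}(\mathcal{S}_{\mathcal{G},\mathcal{I}_a,M}) \subseteq \mathcal{L}(\mathcal{S}_{\mathcal{G},\mathcal{I}_b,M})$, which by Lemma~\ref{nfa-inclusion} is in \PSPACE. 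Because there are only polynomially many constraints and each check can be performed serially with the working space reclaimed before the next check begins, the verification of a fixed $M$ runs in polynomial space overall.

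Finally, I would wrap this verifier in an outer loop. The cleanest argument is to nondeterministically guess $M$ (polynomially many bits), run the polynomial-space verifier, and accept iff it accepts and $|M| \leq k$; this places the problem in \NPSPACE, and Lemma~\ref{savitch-theorem} (Savitch) then yields \PSPACE. Equivalently, one may enumerate all subsets of size at most $k$ deterministically, reusing the same $O(|S|)$ workspace for the current candidate and the same polynomial workspace for verification. The main obstacle to watch is precisely this space reuse: the conflation check is itself \PSPACE-complete and is invoked many times for many candidates, so I would be explicit that PSPACE subroutines compose serially with space reuse and that nothing needs to be retained between iterations except the current $M$ and a single Boolean flag.
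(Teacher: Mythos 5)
Your proposal is correct and follows essentially the same route as the paper: guess $M$ nondeterministically, verify each discrimination constraint via NFA intersection-emptiness (Lemma~\ref{nfa-intersection}, polynomial time) and each conflation constraint via NFA inclusion (Lemma~\ref{nfa-inclusion}, polynomial space), then invoke Savitch's theorem to collapse \NPSPACE{} to \PSPACE. Your added remarks on serial space reuse and the deterministic-enumeration alternative are sound elaborations of the same argument rather than a different approach.
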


\shortenXor{
\begin{proof}

\gobblexor{To show that MSSADDI is in \PSPACE, we \gobble{shall} show that it is in \NPSPACE, and through Lemma~\ref{savitch-theorem}, this implies that it is also in \PSPACE.}{We show that MSSADDI is in \NPSPACE, and through Lemma~\ref{savitch-theorem}, this implies that it is also in \PSPACE.}

\gobblexor{Given this fact, assume that we have `guessed' a sensor selection $M \subseteq S$ which, in polynomial space, must be verified to be a satisfying sensor selection. Thus, we must verify that each $[\mathcal{I}^1, \mathcal{I}^2]^{\apart} \in I_D$ and each $(\mathcal{I}^1, \mathcal{I}^2)^{\together} \in I_C$ is satisfied by $M$.}
{Suppose some sensor selection $M \subseteq S$ has been `guessed' and must, in polynomial space, be verified to meet Definition~\ref{def:sats}, i.e., each $[\mathcal{I}^1, \mathcal{I}^2]^{\apart} \in I_D$ and $(\mathcal{I}^1, \mathcal{I}^2)^{\together} \in I_C$ shown to be satisfied by $M$. }

First, to show any $[\mathcal{I}^1, \mathcal{I}^2]^{\apart} \in I_D$ can be checked in polynomial time (and thus also polynomial space):
construct $\mathcal{S}_{\mathcal{G}, \mathcal{I}^1, M}$ and $\mathcal{S}_{\mathcal{G}, \mathcal{I}^2, M}$. This simply replaces the alphabet in the product, which is of size $|V||Q|$). Then, determining whether $\mathcal{L}(\mathcal{S}_{\mathcal{G}, \mathcal{I}^1, M}) \cap \mathcal{L}(\mathcal{S}_{\mathcal{G}, \mathcal{I}^2, M}) = \emptyset$ is simple (cf.~Lemma~\ref{nfa-intersection}). 

\gobble{Thus, the total amount of time taken to check the discrimination constraints can be upper bounded by $O\left(\sum_{[\mathcal{I}^1, \mathcal{I}^2]^{\apart} \in I_D} \poly(|V||Q_{\mathcal{I}^1}| , |V||Q_{\mathcal{I}^2}|)\right)$ which is polynomial in the input size.}

Next, conflation constraints: follow a similar process to construct their signature automata $\mathcal{S}_{\mathcal{G}, \mathcal{I}^1, M}$ and $\mathcal{S}_{\mathcal{G}, \mathcal{I}^2, M}$,
and ascertain whether $\mathcal{L}(\mathcal{S}_{\mathcal{G}, \mathcal{I}^1, M}) \subseteq \mathcal{L}(\mathcal{S}_{\mathcal{G}, \mathcal{I}^2, M})$. By Lemma \ref{nfa-inclusion}, we know this problem is \PSPACE-Complete, thus, it can be determined using only a polynomial amount of space. 
Hence $\text{MSSADDI} \in \NPSPACE \implies \text{MSSADDI} \in \PSPACE$.
\end{proof}
}
{
\begin{proof}
The full detailed proof appears in \cite{phatak23sensorfull}.
\end{proof}
}

\gobblexor{Next, to show that MSSADDI is \PSPACE-Hard}{Next, for showing hardness}, we reduce from the NFA inclusion problem\gobblexor{ in Lemma~\ref{nfa-inclusion}.}{.}
%
%
One can think of this intuitively as showing that conflation constraints, in solving the inclusion problem on signature automata, cover worst-case instances. 


\begin{lemma} \label{MSSADDI-PSPACE-Hard}
    MSSADDI is \PSPACE-Hard
\end{lemma}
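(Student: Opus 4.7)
The plan is a polynomial-time many-one reduction from NFA inclusion (Lemma~\ref{nfa-inclusion}), leveraging the conflation--subset correspondence just established. Given NFAs $\mathcal{A} = (Q_A, \Sigma, \delta_A, q_0^A, F_A)$ and $\mathcal{B} = (Q_B, \Sigma, \delta_B, q_0^B, F_B)$, taken WLOG to have disjoint state sets, I will build a MSSADDI instance whose yes-answer coincides with $\mathcal{L}(\mathcal{A}) \subseteq \mathcal{L}(\mathcal{B})$. The idea is to encode each NFA as an itinerary on a shared world graph so that, with all sensors selected, the signature automaton is a faithful copy of the NFA (nondeterminism included), and then pose conflation of the two itineraries.

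For the construction, let $V = \{v_0\} \cup Q_A \cup Q_B$; install one sensor $s_a$ with single event $y_a$ per symbol $a \in \Sigma$; and add empty-label ``launch'' edges $e_A, e_B$ from $v_0$ to $q_0^A$ and $q_0^B$ respectively. For each transition $(q, a, q') \in \delta_A$ place an edge from $q$ to $q'$ labelled $\{y_a\}$, and likewise for $\delta_B$. Define itinerary DFA $\mathcal{I}_A$ to have states $\{init, reject\} \cup Q_A$: from $init$ only $e_A$ advances (to $q_0^A$); from $q \in Q_A$ only $\mathcal{A}$-edges out of $q$ advance (to the appropriate target in $Q_A$); every other edge leads to $reject$; the accept states are $F_A$. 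Define $\mathcal{I}_B$ analogously on $Q_B$. The discernment designation contains the single conflation constraint $(\mathcal{I}_A, \mathcal{I}_B)^{\together}$, supplemented by $|\Sigma|$ small ``forcing'' gadgets (one per symbol): each attaches a fresh launch edge from $v_0$ to a local vertex branching into two edges whose labels differ exactly in the presence of $\{y_a\}$, with two singleton itineraries picking out those two walks joined by a discrimination constraint. Set $k = |S| = |\Sigma|$.

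For equivalence, the forcing gadgets make sensor $s_a$ the sole distinguishing signal between the two gadget walks, so every $s_a$ must lie in $M$, and with $k = |\Sigma|$ this compels $M = S$. With $M = S$, Lemma~\ref{signlang} yields $\mathcal{L}(\mathcal{S}_{\mathcal{G}, \mathcal{I}_A, M}) = \{y_{a_1}\cdots y_{a_n} : a_1\cdots a_n \in \mathcal{L}(\mathcal{A})\}$, with the NFA nondeterminism reproduced by parallel edges sharing a common sensor event; the symmetric statement holds for $\mathcal{B}$. Under the identification $y_a \leftrightarrow a$, the conflation--subset lemma proved earlier then renders the conflation constraint equivalent to $\mathcal{L}(\mathcal{A}) \subseteq \mathcal{L}(\mathcal{B})$. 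The construction is clearly polynomial in $|\mathcal{A}| + |\mathcal{B}|$, so NFA inclusion reduces to MSSADDI.

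The main obstacle I expect is confirming that the forcing gadgets do not contaminate the signature languages of $\mathcal{I}_A$ and $\mathcal{I}_B$. I handle this by giving each gadget its own dedicated launch edge from $v_0$ and by defining $\mathcal{I}_A, \mathcal{I}_B$ to route any starting edge other than $e_A$ (resp.\ $e_B$) straight to $reject$; the gadget walks therefore lie outside $\mathcal{L}(\mathcal{I}_A) \cup \mathcal{L}(\mathcal{I}_B)$ and leave the signature automata of interest exactly isomorphic to $\mathcal{A}$ and $\mathcal{B}$. Once this bookkeeping is pinned down, the rest of the argument is essentially a spelling-out of Lemma~\ref{signlang} and the earlier conflation--subset lemma.
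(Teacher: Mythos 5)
Your proposal is correct and follows essentially the same route as the paper's proof: a polynomial-time reduction from NFA inclusion that embeds $\mathcal{A}$ and $\mathcal{B}$ as disjoint branches of a world graph rooted at $v_0$, uses one single-event sensor per alphabet symbol, forces all sensors on via discrimination constraints between singleton itineraries together with $k=|\Sigma|$, and encodes $\mathcal{L}(\mathcal{A})\subseteq\mathcal{L}(\mathcal{B})$ as a single conflation constraint. The only differences are cosmetic: you add dedicated forcing gadgets where the paper reuses BFS shortest-path singleton itineraries over the existing labelled edges, and your $\mathcal{I}_A,\mathcal{I}_B$ track NFA states explicitly where the paper exploits the disconnectedness of the two branches to get away with two-state DFAs that merely check the target of the final edge.
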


\begin{proof}

We reduce from NFA Inclusion, known to be \PSPACE-Complete (Lemma \ref{nfa-inclusion}).
Given an NFA Inclusion Problem instance 
$x = \langle \mathcal{A} = (Q_A, \Sigma, \delta_A, q_0^A, F_A), 
\mathcal{B} = (Q_B, \Sigma, \delta_B, q_0^B, F_B) \rangle$ 
we form an instance of MSSADDI 
$f(x) = \langle \mathcal{G} = (V, E, \src, \tgt, v_0, S,\mathbb{Y},\lambda), \mathcal{D}=(I, I_D, I_C), k \rangle$.

Every state of $\mathcal{A}$ and $\mathcal{B}$ will be assumed to reachable from their respective start states 
(unreachable states do not contribute to the NFA's language, and are easily trimmed).
We construct $\mathcal{G}$ as follows:---

\begin{enumerate}

\item Let the vertex set be $V = \{ v_0 \} \cup Q_A \cup Q_B$ where $v_0$ is a new vertex not in either $Q_A$ or $Q_B$. 

\item Let the edge set be $E = \{ e_A, e_B \} \cup \{ e_1, e_2, \dots, e_n,\allowbreak e_{n+1}, e_{n+2}, \dots, e_{n+m} \}$. Here $e_A$ is an edge that connects $v_0$ to $q_0^A$ and $e_B$ is an edge connecting $v_0$ to $q_0^B$. Assuming there are $n$ transitions in $\mathcal{A}$ of the form $q_j^A \in \delta_A(q_i^A, \sigma)$, we produce an edge $e_k$ for some $1 \leq k \leq n$ which connects $q_i^A$ to $q_j^A$ for every such $\sigma$. Similarly, if there are $m$ transitions in $\mathcal{B}$ of the form $q_j^B \in \delta_B(q_i^B, \sigma)$, we would have an edge $e_{n+k}$ for some $1 \leq k \leq m$ connecting $q_i^B$ to $q_j^B$ for each $\sigma$. The $\src$ and $\tgt$ functions are defined appropriately for all edges.

\item Let sensor set $S = \{ s_1, \gobble{s_2,} \dots, s_{|\Sigma|} \}$ where each sensor produces exactly one event so that if $\Sigma = \{ \sigma_1, \gobble{\sigma_2,} \dots, \sigma_{|\Sigma|}\}$ then $Y_{s_i} = \{ \sigma_i \}$ and $\mathbb{Y} = \{ Y_{s_1}, \gobble{Y_{s_2},} \dots, Y_{s_{|\Sigma|}} \}$.

\item The edge labelling function is defined as follows. First, let $\lambda(e_A) = \lambda(e_B) = \emptyset$. Then, for each transition in $\mathcal{A}$ of the form $q_j^A \in \delta_A(q_i^A, \sigma)$, if $\sigma = \epsilon$, label that edge with $\emptyset$, otherwise label it with the singleton set $\{ \sigma \}$ for all such $\sigma$. Follow the same procedure again for $\mathcal{B}$.
Note that, by construction, a single sensor may cover an edge from both $\mathcal{A}$ and $\mathcal{B}$. 
This is natural as the given NFAs share the alphabet $\Sigma$.
Importantly: this does not violate the assumption that sensors have pairwise distinct readings. 
Turning some sensor on, means we receive its readings from both regions---that constructed from $\mathcal{A}$ \emph{and} $\mathcal{B}$---or, when turned off, from neither.
\end{enumerate}

The following define $\mathcal{D}$, the discernment designation:---

\begin{enumerate}

\item In the world graph $\mathcal{G}$ constructed in the previous step, let there be $p \leq n+m$ edges collected as $\{e_{i_1}, e_{i_2}, \dots, e_{i_p}\}$ where we have that each of them has a non-empty label, i.e., $e_{i_k} \in E$, and $\lambda(e_{i_k}) \neq \emptyset$ for every $1 \leq k \leq p$.
Then let the set of itineraries $I$ be $\{ I_{e_{i_1}}, I_{e_{i_2}}, \dots, I_{e_{i_p}} \} \cup \{ I_{e_{i_1}^+}, I_{e_{i_2}^+}, \dots, I_{e_{i_p}^+} \} \cup \{ I_A, I_B \}$, where
we will give the language accepted by each DFA.
The first $2p$ elements have a language with a single string:
for $1 \leq k \leq p$,
to determine the languages $\mathcal{L}(I_{e_{i_k}})$ and $\mathcal{L}(I_{e_{i_k}^+})$, run a breadth first search (BFS) from $v_0$ on $\mathcal{G}$.
This co-routine will return the shortest path (consisting of specific edges) from $v_0$ to $\srcfunc{e_{i_k}}$. 
This path is the only string accepted by $I_{e_{i_k}}$, and the same
path but with $e_{i_k}$ appended is the only string accepted by $I_{e_{i_k}^+}$.

%
 
Next, itinerary DFA $I_A$ is to be defined so it accepts a string
$e_{i_1}e_{i_2}\dots e_{i_r}$ where $e_{i_k} \in E$ for all $1 \leq k \leq r$
if and only if $\tgt(e_{i_r}) \in F_A$.
Similarly, define DFA $I_B$ so that it accepts a string
$e'_{i_1}e'_{i_2}\dots e'_{i_{q}}$ where $e'_{i_k} \in E$ for all $1 \leq k \leq q$
if and only if $\tgt(e'_{i_{q}}) \in F_B$. Note that we are not asking for the
given NFAs $\mathcal{A}$ and $\mathcal{B}$ to be converted to DFAs\,---\,instead, we
are simply constructing a DFA which recognizes that some \textit{path} of an
accepting string arrives at an accepting state in the NFA. The construction of such a
DFA is simple: 
For $I_A$, define two states $q_0$ and $q_1$, with only $q_1$ accepting. 
Then, define transitions from $q_0$ to $q_1$ and $q_1$ to $q_1$ for
all $e \in E$ such that $\tgt(e)$ is a final state in $\mathcal{A}$. Similarly, define transitions
from $q_0$ to $q_0$ and $q_1$ to $q_0$ for all $e \in E$ such that $\tgt(e)$ is
not a final state in $\mathcal{A}$. 
Doing the same  for $\mathcal{B}$ gives $I_B$.

\item Define $I_D = \left\{ [I_{e_{i_1}}, I_{e_{i_1}^+}]^{\apart}, \dots, [I_{e_{i_p}}, I_{e_{i_p}^+}]^{\apart} \right\}$.

\item Finally, define $I_C = \left\{ (I_A, I_B)^{\together} \right\}$.

\end{enumerate}

Lastly, let $k = |\Sigma|$. 

This three-piece mapping is accomplished in polynomial time since the size of the world graph is $O(1 + |\mathcal{A}| + |\mathcal{B}|)$ and the size of $\mathcal{D}$ (i.e., the number of constraints) is $O(|\mathcal{A}| + |\mathcal{B}|)$.\footnotemark
\footnotetext{Here, $|\cdot|$ gives the number of transitions or states, whichever is greater.}
Since BFS runs in polynomial time on $\mathcal{G}$, all the discrimination requirements need polynomial time to construct and each is of polynomial size.
\gobble{In other words, for each itinerary in a discrimination constraint, its singleton language is of polynomial length (since $1 \leq q < |V|$ if $q$ is the length of the shortest path), thus the DFA used to construct it must also be of polynomial size.}
For the itineraries in the conflation constraints, the DFAs have 2 states and $|E|$ transitions\gobblexor{, which is polynomial in the size of $\mathcal{A}$ and $\mathcal{B}$.}{.}

Finally, to prove correctness: there must be a satisfying sensor selection of size at most $k$ if and only if $\mathcal{L}(\mathcal{A}) \subseteq \mathcal{L}(\mathcal{B})$. 

($\implies$) Assume that $\mathcal{L}(\mathcal{A}) \subseteq \mathcal{L}(\mathcal{B})$. Then the sensor selection $M = S$ is a satisfying sensor selection because, firstly, $|M| = |\Sigma| = k$. Secondly, note that all the discrimination constraints are satisfied because all the sensors are turned on. Lastly, the conflation constraint is also satisfied by reasoning as follows: any walk beginning at $v_0$ first going to $q_0^A$ and ending at some $v \in F_A$ has a signature $\{ \sigma_1 \}\{ \sigma_2 \}\dots \{ \sigma_m \}$ for which $\sigma_1\sigma_2\dots \sigma_m \in \mathcal{L}(\mathcal{A})$ which implies $\sigma_1\sigma_2\dots \sigma_m \in \mathcal{L}(\mathcal{B})$. But, by construction, one can take a path in the world graph, taking a first step from $v_0$ to $q_0^B$ without producing any sensor value, and then follow exactly the same path that is accepting in $\mathcal{B}$ through the world graph, and this path will produce signature $\{ \sigma_1 \}\{ \sigma_2 \}\dots \{ \sigma_m \}$.

($\impliedby$) Assume there exists some satisfying sensor selection of size less than or equal to $k = |\Sigma|$. Firstly, no sensor may be turned off since doing so would violate the discrimination constraint between the singleton itineraries involving the edge(s) labelled with the disabled sensor's value.
Thus, the sensor selection has size exactly $k$. Secondly, the conflation constraint is also met implying that, for all signatures $\{ \sigma_1 \}\{ \sigma_2 \}\dots \{ \sigma_m \}$ produced by taking $v_0$ to $q_0^A$ and ending at some $v_i \in F_A$, there exists a path from $v_0$ to $q_0^B$ ending at $v_j \in F_B$ such that its signature is also $\{ \sigma_1 \}\{ \sigma_2 \}\dots \{ \sigma_m \}$. Since no sensor is turned off, the paths that obtain the signatures in the world graph can be taken in $\mathcal{A}$ and $\mathcal{B}$ as well, so $\sigma_1{\sigma_2}\dots \sigma_m \in \mathcal{L}(\mathcal{A})$ implies $\sigma_1{\sigma_2}\dots \sigma_m \in \mathcal{L}(\mathcal{B})$, thus $\mathcal{L}(\mathcal{A}) \subseteq \mathcal{L}(\mathcal{B})$.
\end{proof}

\begin{theorem}
    MSSADDI is \PSPACE-Complete.
\end{theorem}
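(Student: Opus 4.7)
The plan is essentially to assemble the theorem as an immediate corollary of the two preceding lemmas, since \PSPACE-Completeness is by definition the conjunction of membership in \PSPACE and \PSPACE-Hardness. First I would invoke Lemma~\ref{MSSADDI-PSPACE} to obtain MSSADDI $\in$ \PSPACE; the nondeterministic guess-and-verify argument there already discharges the membership side using Savitch's Theorem to bridge \NPSPACE\ and \PSPACE, and verification of each constraint reduces to an intersection or inclusion check on signature automata of polynomial size.

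Next I would invoke Lemma~\ref{MSSADDI-PSPACE-Hard} to obtain \PSPACE-Hardness of MSSADDI. The polynomial-time reduction from NFA Inclusion constructed there encodes the two input NFAs as a world graph (with an extra initial vertex branching to the two start states), singleton-language discrimination constraints that force every sensor to be retained, and a single conflation constraint $(I_A,I_B)^{\together}$ whose satisfaction is equivalent to $\mathcal{L}(\mathcal{A}) \subseteq \mathcal{L}(\mathcal{B})$.

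Combining these two facts yields that MSSADDI is simultaneously in \PSPACE\ and \PSPACE-Hard, which is exactly the statement that MSSADDI is \PSPACE-Complete. Since all the technical content has already been established, no further argument is needed and there is no real obstacle to overcome at this stage; the theorem is strictly a packaging step that records the tight complexity classification and highlights that conflation constraints, rather than discrimination constraints alone, are what push the problem beyond \NP\ (under standard assumptions such as $\NP \neq \PSPACE$).
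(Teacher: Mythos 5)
Your proposal is correct and matches the paper's proof exactly: the theorem is obtained by combining Lemma~\ref{MSSADDI-PSPACE} (membership) with Lemma~\ref{MSSADDI-PSPACE-Hard} (hardness). The additional commentary you include is accurate but not needed for the packaging step.
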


\begin{proof}
Follows from Lemmas~\ref{MSSADDI-PSPACE} and~\ref{MSSADDI-PSPACE-Hard}.
\end{proof}

\section{Algorithm Description}
\label{algorithm-description}

Having proved the theoretical complexity class of MSSADDI, we now turn to a
description of the algorithm we used to solve it.
Although the algorithm is not polynomial time (as, assuming $\P \neq
\PSPACE$, it couldn't be) we introduce several optimizations to help ameliorate
its running time.

\subsection{Baseline Algorithm}

The approach we chose for solving MSSADDI was a complete enumeration of
subsets, with some shortcutting.  The pseudo-code, based
directly on the automata theoretic connections identified in the preceding,
appears in Algorithm~\ref{alg-mssaddi}. 

It is a top down search over all subsets of $S$ where we attempt to check each
constraint by constructing its signature automaton and verifying the
intersection and subset properties, lines~\ref{disc:inter-check} and~\ref{conf:sub-check}, respectively, as in the previous sections.
Discrimination constraints are checked first (lines~\ref{discr:start}--\ref{discr:end}) because we expect them
to be easier to check than conflation constraints (Lemmas~\ref{nfa-intersection} and~\ref{nfa-inclusion}).

We take advantage of one more property of sensor sets in relation to
discrimination constraints to define our baseline algorithm. Since we stipulate
that different sensors produce different sensor outputs, it follows that if $M
\subseteq S$ does not satisfy a discrimination constraint, then neither can any
subset of $M$. Therefore, when no combination of sensors of size $k$ satisfies
\emph{all} the discrimination constraints, the search is ended, and the current
best satisfying sensor set returned (line~\ref{prev-k-return}).

Next, we propose two optimizations over the baseline algorithm just described.
While each does involve a different trade-off, neither sacrifices the correctness guarantee.

\begin{algorithm}[t!]
\caption{Complete Enumeration for MSSADDI}
\label{alg-mssaddi}

\begin{algorithmic}
\State \textbf{Inputs}: A world graph $\mathcal{G} = (V, E, \src, \tgt, v_0, S,\mathbb{Y},\lambda)$ and a discernment designation $\mathcal{D}=(I, I_D, I_C)$

\State \textbf{Output}: The minimum satisfying  sensor selection, if it exists, otherwise \textbf{null} 

\end{algorithmic}

\begin{algorithmic}[1]

\State $M^* \leftarrow \textbf{null}$
\Comment{The current best sensor set}

\For{$k=|S|$ \textbf{down to} $0$} \label{count-down} 


\For{$M$ in \textsc{Combinations}$(S, k)$} \label{combinations}

\For{$[\mathcal{I}^1, \mathcal{I}^2]^{\apart} \in I_D$} \label{discr:start}
\State $\mathcal{S}_{\mathcal{G}, \mathcal{I}^1, M} \leftarrow \textsc{SignatureAutomaton}(\mathcal{G}, \mathcal{I}^1, M)$  \label{disc:sig1}
\State $\mathcal{S}_{\mathcal{G}, \mathcal{I}^2, M} \leftarrow \textsc{SignatureAutomaton}(\mathcal{G}, \mathcal{I}^2, M)$ \label{disc:sig2}

\If{$\mathcal{L}(\mathcal{S}_{\mathcal{G}, \mathcal{I}^1, M}) \cap \mathcal{L}(\mathcal{S}_{\mathcal{G}, \mathcal{I}^2, M}) \neq \emptyset$} \label{disc:inter-check}
\State \textbf{Continue} to next $M$
\Comment{Check next combination}
\EndIf
\EndFor \label{discr:end}

\For{$(\mathcal{I}_1, \mathcal{I}_2)^{\together}\in I_C$} \label{comb:start}
\State $\mathcal{S}_{\mathcal{G}, \mathcal{I}^1, M} \leftarrow \textsc{SignatureAutomaton}(\mathcal{G}, \mathcal{I}^1, M)$ \label{conf:sig1}
\State $\mathcal{S}_{\mathcal{G}, \mathcal{I}^2, M} \leftarrow \textsc{SignatureAutomaton}(\mathcal{G}, \mathcal{I}^2, M)$ \label{conf:sig2}

\If{$\mathcal{L}(\mathcal{S}_{\mathcal{G}, \mathcal{I}^1, M}) \not\subseteq \mathcal{L}(\mathcal{S}_{\mathcal{G}, \mathcal{I}^2, M})$} \label{conf:sub-check}
\State \textbf{Continue} to next $M$
\Comment{Check next combination}
\EndIf
\EndFor \label{comb:end}

\If{All $I_D$ and $I_C$ satisfied}
\State $M^* \leftarrow M$
\State \textbf{Continue} to next $k$
\Comment{Now try sets of size $k-1$}
\EndIf

\EndFor

\If{No $M$ where $|M| = k$ satisfies all $I_D$}
\State \Return $M^*$ 
\Comment{Prior solution was smallest feasible one} \label{prev-k-return}
\EndIf

\EndFor

\State \Return $M^*$ \label{return-stmt}
\Comment{Final exit}

\end{algorithmic}
\end{algorithm}

\subsection{The Caching Optimization}

Notice how the signature automaton is constructed each time an itinerary is encountered in a constraint (lines \ref{disc:sig1}--\ref{disc:sig2} and \ref{conf:sig1}--\ref{conf:sig2}).
This seems to be wasteful if an itinerary appears in multiple constraints (as it can be with several).
The signature automaton can be cached after it is constructed should the same itinerary appear in another constraint, allowing it to be retrieved without the need for additional computation.

Note, however, the trade-off being made here: while the running time reduced, the space requirements increased. 
Typical library implementations allow for language intersection and subset properties to be checked only on DFA's which, when converted, can result in an exponential increase in space requirements.

\subsection{The Adaptive Weights Optimization}

The second optimization  we introduce is a dynamic reordering of constraints.
Inspired by classical methods in AI for constraint satisfaction problems (CSP's) which seek to make the current assignment \emph{fail fast}, we devised an adaptive weighting mechanism for the desired discernment graph.

Seeking to end the search as fast as possible, discrimination constraints are checked first in the hopes that if none of the sensor sets of cardinality $k$ satisfies the discrimination constraints, then the search can be declared hopeless and ended immediately.
Once a satisfying sensor set is found for the discrimination constraints, though, the following strategy is used. 
Whenever a particular constraint fails to be satisfied, that sensor set `votes' the erring constraint up so that future sets know which constraint is likely to fail.
Thus, after a few iterations, enough data is collected so that a sensor set checks that constraint first which most of the sets before it failed on.
The idea is the more demanding (or stringent) constraints are learned and propagated upward for prioritization.

\section{Experimental results}
\label{experimental-results}

The following experiments were all performed on a computer running Windows~11 with an Intel i7 CPU having \num{16} GB RAM using Python 3.

As a basic sanity check, we ran the baseline algorithm on the problems presented in Section~\ref{introduction-section}. For these problems, the algorithm correctly provided the optimal solutions in less than \SI{1}{\second}.
Next, to test the scalability of the proposed approach and to assess the impact of the optimizations, we ran the experiments that follow.


\subsection{Test cases}

\begin{figure}[t]
  \centering
  \includegraphics[width=0.68\linewidth]{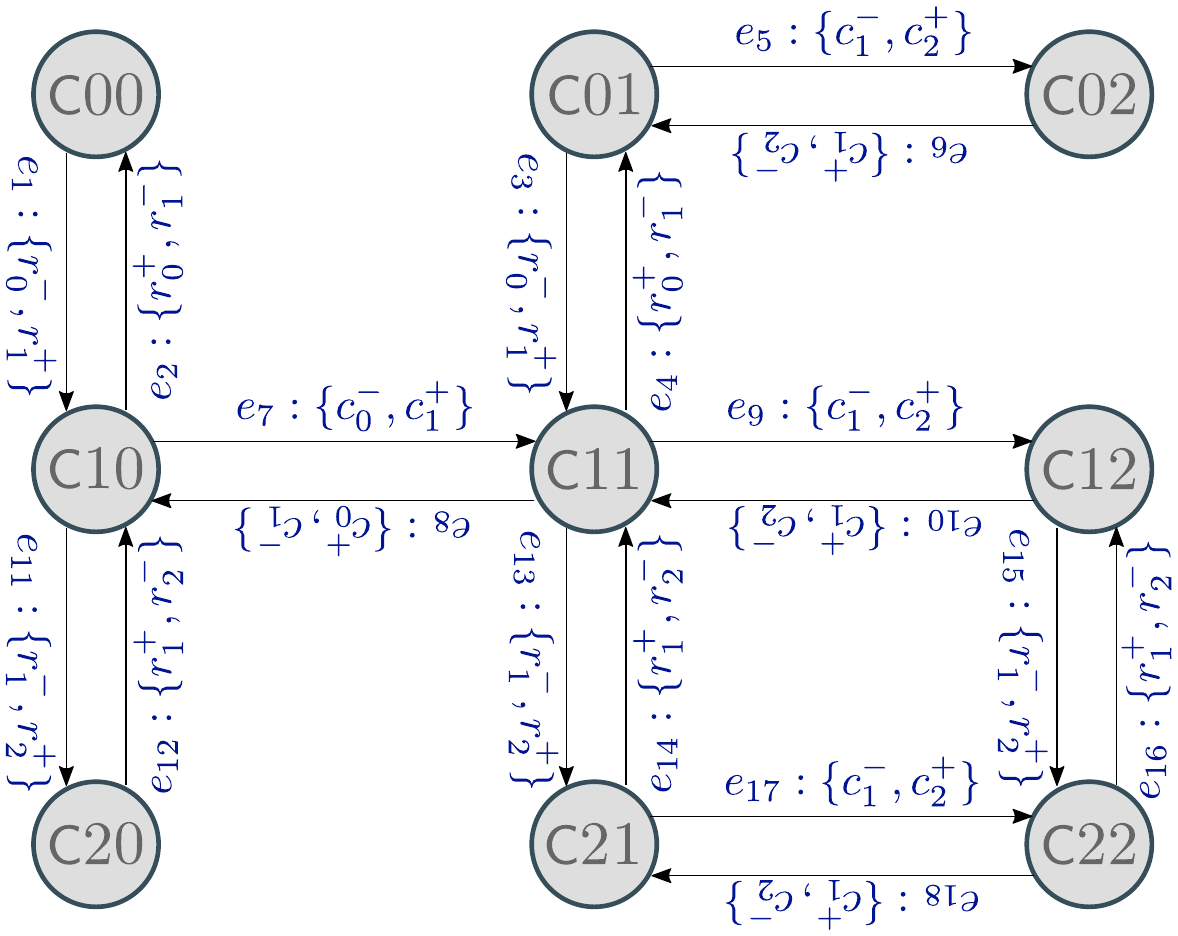}
  \begin{minipage}{0.38\linewidth}
  \phantom{ww}~\\
  \includegraphics[width=0.97\linewidth]{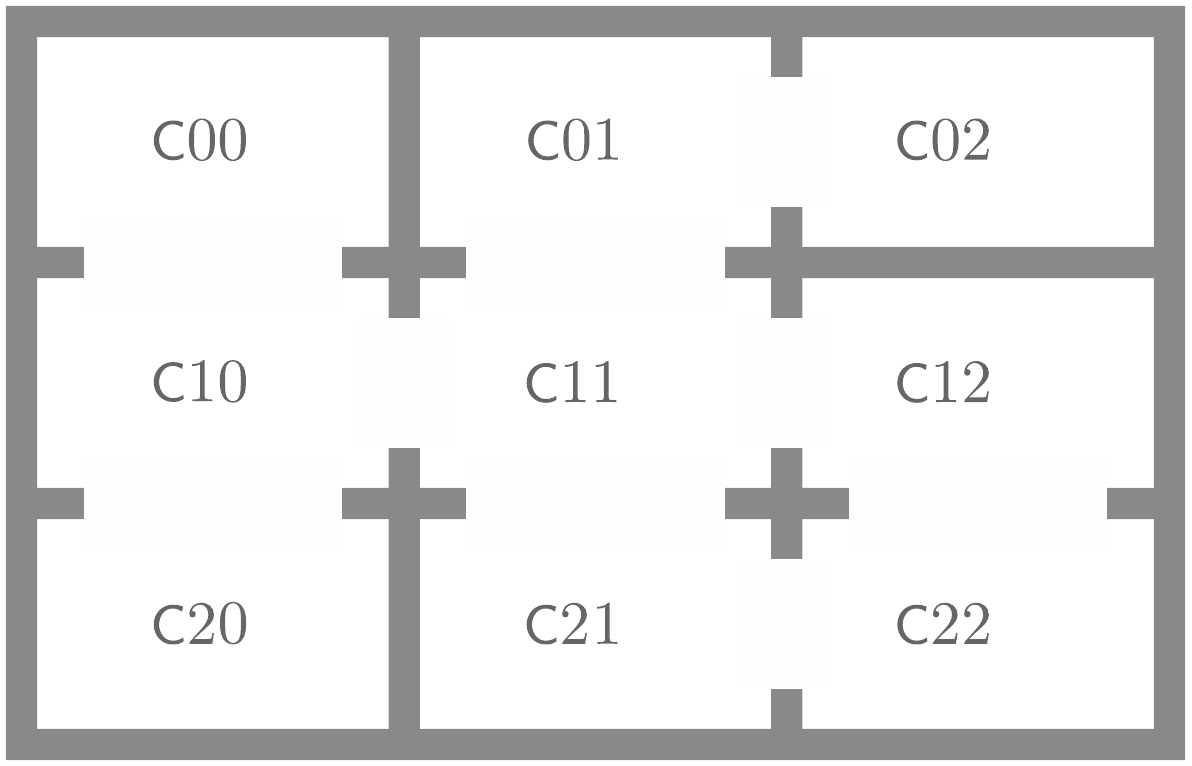}
  \end{minipage}
  \begin{minipage}{0.58\linewidth}
  \caption{
  An example world for testing the algorithm's scaling. This is the $3 \times 3$ case, with blueprint of the map at left, and labelled world graph shown above. There are \num{3} row and \num{3} column sensors available which detect the agent's presence.
    The problem is choose a subset of these.
  }
  \label{3x3Graph}
  \end{minipage}
  \vspace*{-12pt}
\end{figure}

The test cases we propose are designed such that they are parameterized:
we use an $m \times n$ grid-type world graph. 
An example with $m=n=3$ is shown in Figure \ref{3x3Graph}, with the scaled versions 
adding cells rightward and downward (without any missing edges unlike the figure).
There is a sensor in each row that registers the fact that agent is present within the associated row. Similarly, a column sensor detects when the agent is within that column. 
Sensor set $S$ consists of \mbox{$m+n$} sensors, one for each row and each column.
The figure shows the labelled world graph, this small instance with
\num{18} edges, the arcs each bearing their $\lambda$-based labelling. These
follow a simple pattern: for example, $r_2^+$  means that row $2$'s sensor has
triggered, going from the unoccupied to occupied state; while $c_1^-$ means
that column $1$'s sensor has gone from the occupied to unoccupied.

Finally, we construct an itinerary for every state in the world graph where the language accepted by the DFA for the itinerary describes following any edge in the world graph any number of times followed by an edge incoming to this state.
Essentially, the itinerary DFA for that state accepts a string of edges if and only if the last edge that was taken in that walk was an incoming edge to that state.

The number of constraints are proportional to the number of states in the world graph.
We add $mn$ discrimination constraints each by randomly selecting any 2 itineraries which describe ending in two states which are in a different column \emph{and} in a different row.
Similarly, we also add $m$ conflation constraints per column, each between 2 random itineraries that describe ending in different rows in that column. 
Thus, in expectation, each itinerary is in 2 discrimination constraints and 2 conflation constraints.


\subsection{Solutions}

From the description of the problem above, it should be clear that activating either only the row sensors or only the column sensors should be a satisfying sensor selection for the discrimination constraints alone. 
After all, ending in a different row and column can be distinguished on the basis of either information provided by a row sensor or a column sensor.
However, when considering both the discrimination and conflation constraints, only one of these options becomes feasible\,---\,namely, that involving only activating the column occupancy sensors.
Activating a row sensor could potentially violate some conflation constraints which describe ending in that row.
Note that we see another detail of MSSADDI reiterated here\,---\,that when $n > m$, it may be necessary to activate more sensors (i.e., column sensors as opposed to only the row sensors) to satisfy the both upper and lower information bounds as opposed to the lower bounds alone.

\subsection{Analysis}

\begin{figure}[t!]
  \centering
\gobblexor{\hspace*{-4pt}}{\hspace*{-8pt}}\includegraphics[width=1.02\linewidth]{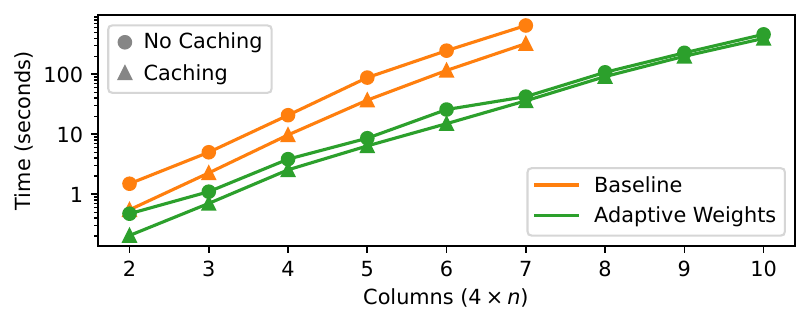}
  \vspace*{-20pt}
  \caption{
  Effect of all the optimizations for various grid sizes.
  }
  \label{AllOptimizationsGraph}
  \vspace*{-10pt}
\end{figure}

The basic scaling plot for various grid sizes is shown in Figure \ref{AllOptimizationsGraph}.
As can be seen in that plot, using the caching optimization alone led on average to a \SI{53.5}{\percent} reduction in the running time.
For our purposes, all the signature automata were able to be cached, and memory did not seem to be an issue (i.e., we never received an out-of-memory exception).
Thus, time, not space, seemed to be a dominating factor in solving this problem with current resources.

The results are even more impressive for the adaptive weights optimization.
As compared to the baseline algorithm, it led on average to a \SI{87.6}{\percent} improvement in running time. 
When both optimizations are applied together, however, caching the signature automata seems to have little effect when adaptive weights are already in use.
This makes sense because the adaptive weights allow a sensor set to be determined as unsatisfiable fast, lowering the probability that the same itinerary will be checked more than once.

\begin{figure}[t!]
  \centering
\gobblexor{\hspace*{-4pt}}{\hspace*{-8pt}}\includegraphics[width=1.04\linewidth]{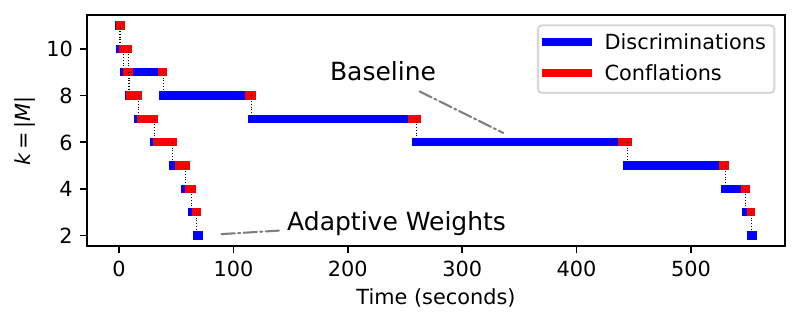}
  \vspace*{-16pt}
  \caption{
  Effect of dynamically reordering constraints when checking each sensor combination. The horizontal axis shows the progression of time and the vertical axis the size of the sensor set being checked.
  }
  \vspace*{-13pt}
  \label{SteppingGraph65}
\end{figure}

Seeking to understand how the mix of constraints checked changes when adaptive weights are used, we decided to analyze the time spent by the algorithm in different parts of the code for the $6\times 5$ world graph grid.
We measured the wall clock every time the algorithm started checking subsets of size $k$ (see line~\ref{count-down} in Algorithm \ref{alg-mssaddi}). 
Furthermore, we also kept count of the number of discrimination and conflation constraints checked for each sensor set aggregated over size $k$ before it failed.
The results, including a visualization of the constraint type, appear in the stepping chart in Figure~\ref{SteppingGraph65}.

Notice, first, how the optimization leads to a greater proportion of conflation constraints being checked. 
For our case, conflation constraints tend to fail more often when the sensor set is of high cardinality since they are likely to include row sensors. 
Thus, a greater proportion (or sometimes even absolutely more) of them are checked, as compared to baseline.
We see that the decision, on the basis of Lemmas~\ref{nfa-intersection} and~\ref{nfa-inclusion}, to place lines~\ref{discr:start}--\ref{discr:end} before lines~\ref{comb:start}--\ref{comb:end} may be mistaken, on average.

Secondly, observe how the algorithm is able to terminate after concluding that no set of size $k=2$ will satisfy all the discrimination constraints.
The minimum satisfying sensor set in this case turned out to be \num{3} column sensors.

\section{Conclusion and Future Works}

This paper tackled the sensor selection problem for multiple itineraries while also allowing for considerations of privacy.
We also provided strong reasoning for why merely minimizing selected sensors does not lead to satisfaction of specific privacy requirements.
We formulated this problem  and proved that it was worst-case intractable.
Further, we provided an algorithm (based on automata-theoretic operations)
to solve the problem and considered a few optimizations over the na\"\i ve
implementation.
In the process, we realized that the gains from those optimizations were significant owing to an inclination for wanting incorrect solutions to fail fast.

In the future, research might seek a direct reduction from the problem we proposed to canonical \PSPACE-Complete problems such as QSAT.
Other approaches common to solving computationally hard problems such as random algorithms, and improved heuristics may also be fruitful.

\subsection{Acknowledgements}

This material is based upon work supported in part by the 
National Science Foundation under 
grant~\href{http://nsf.gov/awardsearch/showAward?AWD_ID=2034097}{IIS-2034097}
and DoD Army Research Office under
award~W911NF2120064.

\bibliographystyle{plain} 
\bibliography{refs} 

\end{document}